\theoremstyle{plain}
\newtheorem{theorem}{Theorem}[section]
\newtheorem{proposition}[theorem]{Proposition}
\newtheorem{lemma}[theorem]{Lemma}
\newtheorem{corollary}[theorem]{Corollary}
\theoremstyle{definition}
\newtheorem{assumption}[theorem]{Assumption}
\theoremstyle{remark}
\icmltitlerunning{UniAttn: Reducing Inference Costs via Softmax Unification for Post-Training LLMs}
\begin{document}

\twocolumn[
  \icmltitle{UniAttn: Reducing Inference Costs via Softmax Unification \\ for Post-Training LLMs}



  \icmlsetsymbol{equal}{*}

  \begin{icmlauthorlist}
    \icmlauthor{Yizhe Xiong}{equal,thu}
    \icmlauthor{Wei Huang}{equal,bupt}
    \icmlauthor{Xin Ye}{equal,kuai}
    \icmlauthor{Hui Chen}{thu}
    \icmlauthor{Zijia Lin}{thu}
    \icmlauthor{Haoran Lian}{bhu}
    \icmlauthor{Zhenpeng Su}{kuai}
    \icmlauthor{Jungong Han}{thu_auto}
    \icmlauthor{Guiguang Ding}{thu}
  \end{icmlauthorlist}

  \icmlaffiliation{thu}{School of Software, Tsinghua University}
  \icmlaffiliation{bupt}{School of Computer Science, Beijing University of Posts and Telecommunications}
  \icmlaffiliation{kuai}{Kuaishou Technology}
  \icmlaffiliation{bhu}{Beihang University}
  \icmlaffiliation{thu_auto}{Department of Automation, Tsinghua University}

  \icmlcorrespondingauthor{Yizhe Xiong}{xiongyizhe2001@gmail.com}

  \icmlkeywords{UniAttn, Large Language Models, Softmax}

  \vskip 0.3in
]



\printAffiliationsAndNotice{}  

\begin{abstract}
Post-training is essential for adapting Large Language Models (LLMs) to real-world applications.
Deploying post-trained models faces significant challenges due to substantial memory overhead and noticeable inference latency.
Existing work has identified significant redundancies in LLMs and proposed efficient architectures, namely intra-layer KV sharing and cross-layer KV sharing.
However, these methods still result in high inference time overhead, remaining suboptimal for post-training pre-trained LLMs.
In this paper, we identify that the \texttt{Softmax} operation is a primary bottleneck for LLM inference and discover that it is actually highly redundant during post-training. 
We propose Softmax \textbf{Uni}fication in \textbf{Att}e\textbf{n}tion (\textbf{UniAttn}), a novel post-training method that unifies Softmax activations across transformer blocks to reduce LLM inference costs. Additionally, UniAttn adopts a linear projection to compensate for the errors induced by Softmax unification.
Experiments show that UniAttn matches the performance of standard post-training while significantly reducing inference costs, outperforming existing efficient architectures during post-training.
\end{abstract}

\section{Introduction}
\label{sec:introduction}

Post-training \cite{TULU3} has become a critical step in developing advanced LLMs for both general-purpose tasks \cite{GPT4, LLaVA} and domain-specific tasks \cite{thirunavukarasu2023large, PMC, DeepSeekMath}. 
It typically refers to performing supervised fine-tuning (SFT) or domain-specific continual pre-training on pre-trained LLMs using a text corpus different from the original pre-training dataset.
Current post-training approaches for real-world applications involve fine-tuning popular base models \cite{LLaMA3, Mistral, Gemma2} with standard decoder-based architectures that have scaled to hundreds of billions of parameters. 
However, these approaches face two widely recognized challenges in deploying post-trained models \cite{StreamingLLM, PrefixKV, LLMDrop, CLA, FastV}: (1) the substantial memory overhead required for KV-cache storage during inference, and (2) the computational latency, which leads to noticeable inference time.

To address those challenges, recent research has proposed KV-cache sharing methods, and showed that they offer notable advantages over traditional pruning \cite{LLMPruner,H2O} and quantization \cite{SmoothQuant,AWQ} methods: While pruning maintains model accuracy with only modest efficiency gains, and quantization reduces inference overhead at the expense of substantial performance degradation, KV-cache sharing methods achieve efficiency improvements with minimal performance drops.  
To this end, researchers have identified significant redundancies \textit{inside and across LLM layers}, leading to two mainstream KV-cache sharing paradigms: intra-layer KV sharing, including MQA \cite{MQA} and GQA \cite{GQA}, and a more aggressive cross-layer KV sharing, including CLA \cite{CLA} and YOCO \cite{YOCO}.
Despite achieving impressive KV-cache compression ratios, cross-layer KV sharing yields limited inference time reduction: less than 5\% at deployment (see \cref{sec:exp} for detailed results).
This raises a critical question: \textit{How to design LLM architectures that achieve both inference time and memory efficiency?}

To further explore that question, we delve deeper into LLM architectures and make two key observations. First, the \textit{Softmax operation} significantly contributes to inference costs. Performing the Softmax operation requires access to the entire K-cache. Although the Softmax operation accounts for less than 1\% of the total FLOPs compared to the linear projections in the backbone, it results in higher latency than the linear projections due to limited parallelism \cite{SimA,Consmax}.
Second, \textit{Softmax activations} exhibit high cross-layer similarity in the \textit{top half layers} across various open-source pre-trained LLMs and post-training datasets. 
For instance, the average cross-layer similarity exceeds 0.9 on the top half layers in LLaMA-3.1 8B \cite{LLaMA3} when evaluated on the Tulu3 \cite{TULU3} training set (see \cref{fig:softmax_sim} for detailed results).
This observation highlights the potential for an effective and generalizable approach to leveraging Softmax redundancies during post-training.

In this paper, we propose to post-train LLMs with Softmax \textbf{Uni}fication in \textbf{Att}e\textbf{n}tion (\textbf{UniAttn}). Specifically, we group consecutive transformer blocks in the LLM as several ``Superblocks'', and unify the Softmax activations of all blocks within each Superblock. This approach significantly reduces memory and time costs. 
Additionally, we demonstrate that the error introduced by unifying Softmax activations can be effectively compensated by a linear projection, for which we design an initialization and training pipeline.
To better interpret the performance gains of UniAttn, we present a theoretical analysis comparing it with existing cross-layer KV sharing methods.
Our analysis reveals that UniAttn better preserves the benefits associated with model depth \cite{DeepNet}, enabling the model to acquire new capabilities during post-training.

\begin{figure}
\vspace{-0.05in}
\begin{center}
\centerline{\includegraphics[width=\columnwidth]{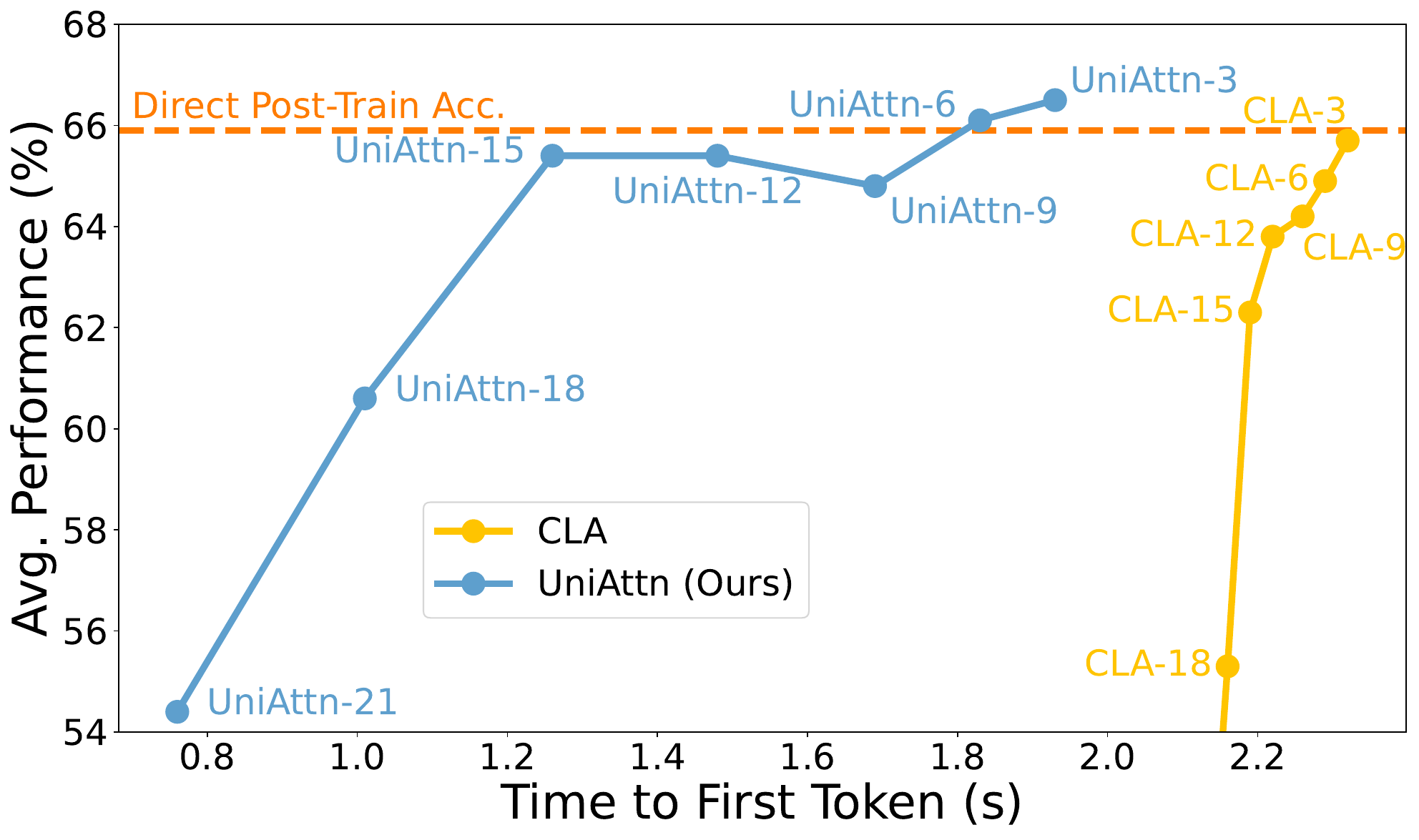}}
\caption{Comparisons of our UniAttn and directly applying cross-layer KV sharing (CLA) during post-training. ``A-X'' represents modifying total of X layers when applying A. }
\label{fig:teaser}
\end{center}
\vspace{-0.35in}
\end{figure}

We conduct extensive experiments on four open-source pre-trained LLMs across two post-training scenarios: enhancing domain-specific capabilities and improving general capabilities. Our results show that UniAttn consistently achieves performance comparable to standard post-training while reducing inference costs. Compared to existing KV-cache sharing baselines, UniAttn achieves better performance with substantially lower inference costs (see \cref{fig:teaser} for an overview). Furthermore, our UniAttn can be combined with intra-layer KV sharing and KV-cache compression methods to further cut down the memory overhead, showing strong practicality for real-world applications.

Overall, we summarize our contribution as follows:

\begin{itemize}
    \item We identify that the Softmax operations in pre-trained LLMs, although yielding high inference costs, exhibit significant redundancies as Softmax activations share high cross-layer similarities. 
    \item Based on the observations, we propose Softmax \textbf{Uni}fication in \textbf{Att}e\textbf{n}tion (\textbf{UniAttn}) for post-training LLMs. Specifically, we unify the Softmax activations across decoder blocks. Additionally, we leverage linear projection to compensate for the error from Softmax unification. We show theoretically that UniAttn better preserves model capabilities than compared baselines.
    \item Extensive experiments show that UniAttn achieves comparable performance to standard post-training while reducing inference costs, outperforming existing efficient architectures. 
\end{itemize}

\section{Related Works}

\subsection{Post-Training LLMs}

Building frontier LLMs for real-world applications involves two crucial stages: pre-training and post-training. Since pre-training data and methodologies are often proprietary, the research community has extensively explored post-training upon open-source pre-trained LLMs \cite{LLaMA3,Gemma2,Mistral} to enhance their \textit{general} or \textit{domain-specific} capabilities for deployment \cite{lian2024breakingstagebarriernovel}.
Post-training is typically performed on instruction-following or domain-specific datasets.
Recently, various datasets have been curated to equip LLMs with specific abilities, including general instruction-following models \cite{Hermes, TULU3, InfinityInstruct2024}, medical QA models \cite{PMC, Aloe, singhal2023expertlevelmedicalquestionanswering}, legal QA models \cite{LawyerLLaMA, LawGPT}, and models with strong mathematical problem-solving capabilities \cite{Goat}.

Existing research on post-training mainly focuses on creating specific datasets for equipping open-source LLMs with specific capabilities. 
Differently, we investigate the redundancies in pre-trained LLMs and leverage them for post-training inference-efficient LLMs.

\subsection{Efficient KV Sharing Architectures} 
Unlike existing pruning and quantization approaches \cite{ToMe,PYRA}, efficient KV sharing architectures utilize structural redundancies to create inference-efficient model variants for deployment.
Those architectures mainly fall into two categories: intra-layer KV sharing, including MQA \cite{MQA} and GQA \cite{GQA}, and cross-layer KV sharing, including CLA \cite{CLA} and YOCO \cite{YOCO}.
Specifically, MQA \cite{MQA} simplifies the attention mechanism by utilizing multiple query heads and a single KV head. 
GQA \cite{GQA} takes a step further from MQA by organizing query heads as multiple groups and assigns different KV heads to each group.
CLA \cite{CLA} proposes a cross-layer sharing mechanism to further reduce KV-cache memory overhead by sharing KV-cache across different layers. 
YOCO \cite{YOCO} transforms the original structure into self-decoders and cross-decoders, and adopts a global KV-cache across decoders.

Existing intra-layer KV sharing works have been adopted by various open-source LLMs \cite{LLaMA3,Mistral,Gemma2}, yielding modest reduction on inference memory overhead. Although cross-layer KV sharing introduces more aggressive memory reduction, according to our analysis in \cref{sec:introduction} and \cref{sec:analysis_method}, it still suffers from efficiency and performance issues. To address this, we propose UniAttn, which achieves promising performance for post-training LLMs and outperforms competing methods.

\section{Methodology}

\subsection{Preliminaries}

In this paper, we focus on mainstream decoder-only LLMs with a transformer structure. Each transformer block in an LLM features a multi-head self-attention (MHA) module and a feed-forward network (FFN). Given an input sequence $\mathbf{x}\in \mathbb{R}^{l\times d}$ where $l$ denotes the sequence length and $d$ denotes the hidden size, both MHA and FFN generate an output sequence with identical length and shape.
We focus on the MHA module. Formally, we denote MHA output in layer $i$ as $\mathbf{x}_i'$, where $\mathbf{x}_i'=\text{MHA}(\text{Norm}(\mathbf{x}_i))+\mathbf{x}_i$. ``$\text{Norm}(\cdot)$'' denotes the Pre-Norm, a component adopted by mainstream open-source LLMs \cite{LLaMA3,Mistral,Gemma2,bai2023qwen}.
In the MHA module in layer $i$, each token in the input sequence $\mathbf{x}_i$ is first projected by $W_{Q,i}$, $W_{K,i}$, and $W_{V,i}$, forming $Q_i,K_i\in d\times d_k$ and $V_i\in d\times d_v$. Then, the Softmax activation $s_i$ is calculated by: 
\begin{equation}
\label{eq:softmax_activation}
    s_i=\text{softmax}(\frac{Q_iK_i^T}{\sqrt{d_k}}).
\end{equation}
Subsequently, $s_i$ is projected using $V_i$ and the output weight matrix $W_{o,i}$ of the $i$th layer, and the input to MHA is added back through a residual connection to produce the MHA output $\mathbf{x}_i'$: 
\begin{equation}
    \mathbf{x}_i'=s_iV_iW_{o,i}+\mathbf{x}_i
\end{equation}

\subsection{UniAttn: Softmax Unification in Attention}
\label{sec:uniattn}

Although existing KV-cache sharing methods--such as intra-layer and cross-layer KV sharing--can reduce the memory footprint, they offer limited inference time reduction.
To achieve both memory-efficiency and inference acceleration, we investigate the Softmax operation because (1) calculating Softmax demands the entire K-cache that accounts for 50\% memory of the KV-cache; and (2) multiple studies \cite{DBLP:conf/accv/GengLZKAC18,SimA} have recognized that the Softmax operation leads to high inference latency. We study the redundancy of Softmax activations in pre-trained LLMs by measuring the importance of Softmax operations in each layer. To ensure the generalizability of our study, 
we employ post-training datasets PMC \cite{PMC}, Infinity Instruct (Math) \cite{InfinityInstruct2024}, Tulu3 \cite{TULU3}, and long samples (F15K) provided by \cite{rerope2023}, and utilize LLaMA-2 7B \cite{touvron2023llama}, LLaMA-3.1 8B \cite{LLaMA3}, Mistral 7B \cite{Mistral}, and Gemma-2 9B \cite{Gemma2}. 
We collect all Softmax activations, i.e., the $s_i$ vectors in \cref{eq:softmax_activation}, and compute the cosine similarity $\text{Sim}(i)=\cos(s_i,s_{i-1})$ for each sequence. We then average $\text{Sim}(i)$ over all sequences. 

\underline{\textit{Observation:}} As shown in \cref{fig:softmax_sim}, Softmax activations in \textit{top half layers} share a high cross-layer similarity across different settings, showing that Softmax operations are highly redundant in LLMs.
See \cref{sec:append_exp} for quantitative analysis. 

\begin{figure}
\begin{center}
\centerline{\includegraphics[width=\columnwidth]{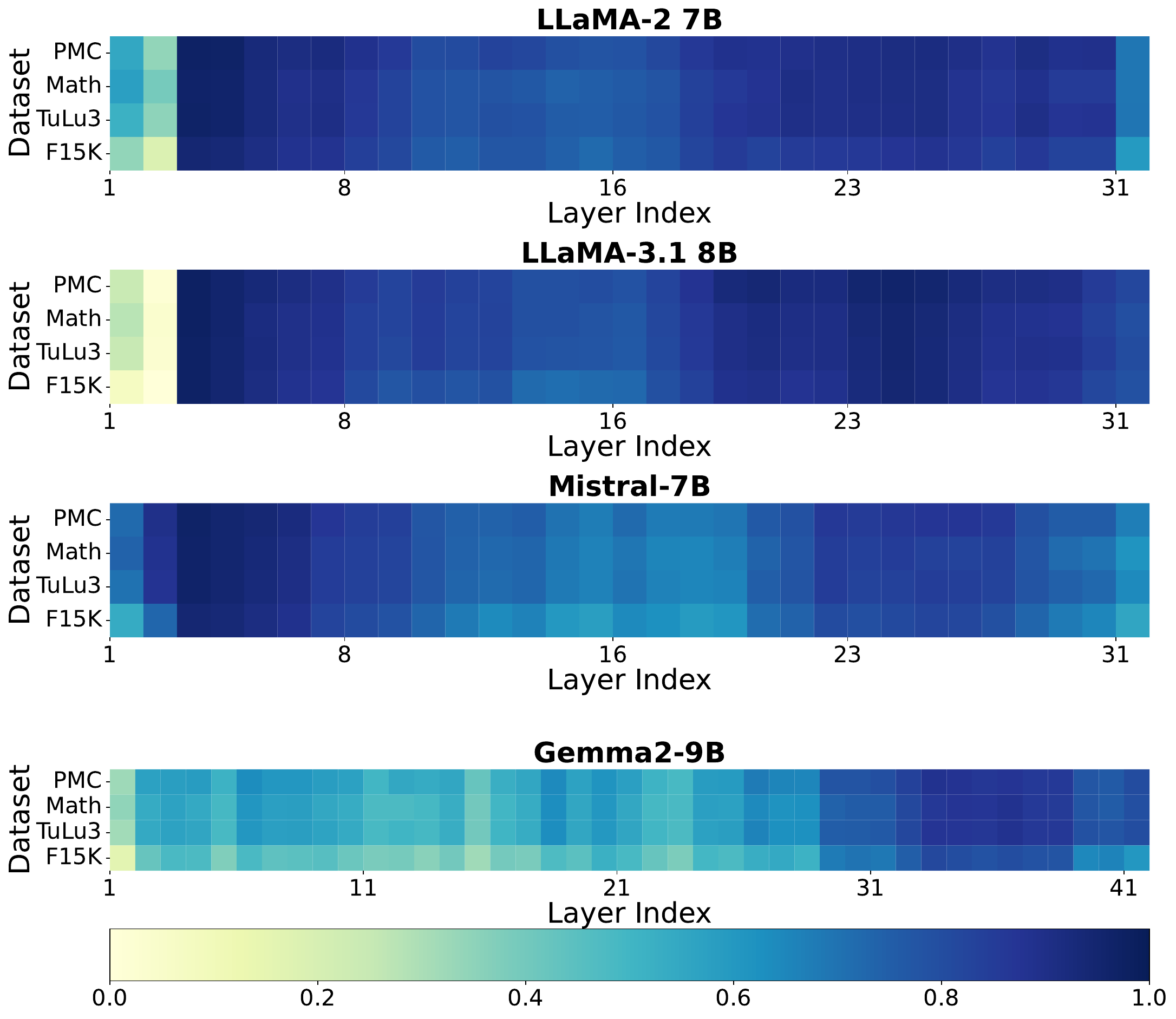}}
\caption{Cosine similarity results of average Softmax activations. Across all settings, the average Softmax activations of the top half of layers (i.e., the right half of each heatmap bar) share a high cosine similarity. }
\label{fig:softmax_sim}
\end{center}
\vspace{-0.25in}
\end{figure}

Motivated by such redundancy, we propose Softmax \textbf{Uni}fication in \textbf{Att}e\textbf{n}tion (\textbf{UniAttn}) for post-training LLMs. Specifically, UniAttn groups consecutive decoder blocks in LLMs as ``Superblocks'', and unifies the Softmax activations inside each Superblock to reduce the full Softmax operations and KV-cache size during inference. Additionally, we observe that the error introduced by Softmax unification can be compensated by a linear projection, and in turn design a pipeline to initialize and train the linear projection. 
UniAttn simultaneously reduces the GPU memory and inference latency while achieving competitive performance. 
Different from cross-layer KV sharing, UniAttn does not diminish the ``depth factor''. We show this in \cref{sec:discussion}.

\textbf{Unify Softmax Activations in Superblocks.}
To apply UniAttn, we first merge several consecutive transformer blocks as ``SuperBlocks''. For simplicity, we create SuperBlocks of the same size. Inside each SuperBlock, only the bottom block employs the Softmax operation. Other blocks simply re-use the Softmax activation matrix $s$ calculated by the bottom block in the SuperBlock. Formally, for layer $i+b$ that re-use the Softmax activation $s_i$ from layer $i$:
\begin{equation}
    \mathbf{x}_{i+b}'=s_iV_{i+b}W_{o,i+b}+\mathbf{x}_{i+b}
\end{equation} 
Since only the bottom block contribute to calculating the Key projection, other blocks that reuse Softmax activations do not store their corresponding K-cache, leading to GPU memory-efficiency. Moreover, removing Softmax calculations in those layers also contribute to inference latency reduction.
Similar to cross-layer KV sharing, UniAttn is also orthogonal to intra-layer KV sharing. In \cref{sec:exp}, we employ pre-trained LLMs with GQA to evaluate UniAttn.

\textbf{Linear Compensation.}
Simply unifying Softmax activations unavoidably introduces errors in the forward pass. Formally, the error $\epsilon$ in layer $i+b$ is:
\begin{equation}
\begin{aligned}
    \epsilon&=\mathbf{x}_{i+b}^{\text{ori}\ \prime}-\mathbf{x}_{i+b}^{\text{uni}\ \prime} = (\mathbf{x}_{i+b}^{\text{ori}}-\mathbf{x}_{i+b}^{\text{uni}})\\
    &+(\text{softmax}(\frac{Q_{i+b}K^T_{i+b}}{\sqrt{d_k}})-s_i)V_{i+b}W_{o,i+b}
\end{aligned}
\end{equation}
where $\mathbf{x}_{i+b}^{\text{ori}}$ and $\mathbf{x}_{i+b}^{\text{Uni}}$ denote the input to layer $i+b$ in the original model and the model with UniAttn applied, respectively. According to \cite{SimA}, calculating linear projections yield \textit{significantly lower} latency compared to the Softmax operation. And thus we study the effectiveness of compensating $\epsilon$ with a linear projection matrix $W_{c}$. To compensate for $\epsilon$, we apply $W_{c}$ to each layer with unified Softmax activations as:
\begin{equation}
    \mathbf{x}_{i+b}'=s_iV_{i+b}W_{o,i+b}+\mathbf{x}_{i+b}+\mathbf{x}_{i+b}W_{c,i+b}
\end{equation}
To find a proper initialization for $W_c$ in a training-free manner, we propose to directly compensate $\epsilon$:
\begin{theorem}
\label{theorem:init}
    The initialization for $W_{c}$ that incurs minimal error when compensating $\mathbb{E}(\epsilon)$ satisfies:
    \begin{equation}
        W_{c}=V\Sigma^{+}U^T\mathbb{E}(\epsilon),
    \end{equation}
    where $U\Sigma V^T$ denotes the SVD decomposition of $\mathbb{E}(\mathbf{x}_{i+b})$, $\Sigma^{+}$ denotes the pseudoinverse of $\Sigma$.
\end{theorem}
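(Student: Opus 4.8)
The plan is to read ``the initialization that incurs minimal error when compensating $\mathbb{E}(\epsilon)$'' as the solution of a Frobenius-norm least-squares problem and then identify that solution with the pseudoinverse expression in the statement. Since the compensation term $\mathbf{x}_{i+b}W_{c}$ is \emph{linear} in the layer input while $W_{c}$ is a single matrix shared across all tokens and samples, the best one can hope for at the population level is to cancel the error in expectation. Using linearity of expectation to write $\mathbb{E}(\mathbf{x}_{i+b}W_{c})=\mathbb{E}(\mathbf{x}_{i+b})W_{c}$, I would therefore pose the objective
\[
\min_{W_{c}}\ \bigl\|\mathbb{E}(\mathbf{x}_{i+b})\,W_{c}-\mathbb{E}(\epsilon)\bigr\|_{F}^{2},
\]
so that ``minimal error'' is made precise as minimizing the Frobenius norm of the residual between the mean compensation and the mean error.

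Next I would solve this as a standard matrix least-squares problem $\min_{W}\|AW-B\|_{F}$ with $A=\mathbb{E}(\mathbf{x}_{i+b})$ and $B=\mathbb{E}(\epsilon)$, whose minimum-norm minimizer is $W=A^{+}B$. To see this and to surface the SVD explicitly, I would substitute the (full) SVD $A=U\Sigma V^{T}$ and use the orthogonal invariance of the Frobenius norm to rewrite the residual as $\|\Sigma\,(V^{T}W)-U^{T}B\|_{F}$. Setting $Z=V^{T}W$ and $C=U^{T}B$ decouples the problem into independent scalar least-squares problems along the diagonal of $\Sigma$, whose solution is $Z=\Sigma^{+}C$; undoing the substitution gives $W_{c}=V\Sigma^{+}U^{T}\mathbb{E}(\epsilon)$, exactly the claimed initialization.

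The main obstacle is not the algebra but the two points that make the pseudoinverse the correct object. First, $\mathbb{E}(\mathbf{x}_{i+b})$ is in general rank-deficient (some singular values vanish or are negligibly small), so the linear system $AW_{c}=\mathbb{E}(\epsilon)$ may be inconsistent or admit infinitely many solutions; I would argue that $\Sigma^{+}$ selects, among all residual-minimizing $W_{c}$, the one of smallest Frobenius norm, which is both well-defined and a sensible regularizing choice for an initialization. Second, I would justify the modeling step of compensating the \emph{mean} error rather than every per-token error: since $W_{c}$ is tied across tokens, matching $\mathbb{E}(\epsilon)$ is the tightest population-level target achievable by a single linear map, and this expectation-level reduction is precisely what a training-free initialization can guarantee before $W_{c}$ is further refined by fine-tuning. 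Once these two points are settled, the derivation above closes the proof.
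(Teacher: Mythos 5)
Your proof is correct, and it proves exactly what the theorem literally states: the minimum-norm minimizer of $\bigl\|\mathbb{E}(\mathbf{x}_{i+b})W_c-\mathbb{E}(\epsilon)\bigr\|_F^2$ is $V\Sigma^{+}U^T\mathbb{E}(\epsilon)$, derived via orthogonal invariance of the Frobenius norm under the SVD and the decoupling into per-singular-value scalar problems. The paper reaches the same formula by a longer route: it first writes the objective as a per-sample sum of residual norms $\frac{1}{S}\sum_j\|\mathbf{x}_{i+b,j}W_c-\epsilon_j\|$, observes that this Least Absolute Deviations problem is computationally expensive, relaxes it to the per-sample least-squares problem whose closed form is $X^{+}E$ with $X$ and $E$ the stacked data matrices, and only then, for efficiency, averages the samples into $\mathbb{E}(\mathbf{x}_{i+b})$ and $\mathbb{E}(\epsilon)$ before applying the pseudoinverse. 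So the mathematical core (pseudoinverse solution of a matrix least-squares problem, computed via SVD) is identical, but the framing differs: you pose the expectation-level objective from the outset using linearity of expectation, which is cleaner and matches the theorem statement more directly, while the paper's detour documents and justifies the practical approximations (sample averaging, the maximum instance size $v$) used in the actual implementation. Your additional remarks on rank deficiency and on why only the mean error can be cancelled by a single shared $W_c$ are sound and are points the paper leaves implicit.
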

We leave the proof in the Appendix. 
In practice, we forward a small portion of training data from the post-training datasets, and use the average values $\bar{\mathbf{x}}_{i+b}$ and $\bar{\epsilon}$ as estimations of $\mathbb{E}(\mathbf{x}_{i+b})$ and $\mathbb{E}(\epsilon)$. Considering that compensating for errors in previous layers would influence the calculation in subsequent layers, we calculate the initialization of $W_c$ bottom-up.  
Experiments on the F15K dataset \cite{rerope2023} with sequence length 5,120 show that, after linear compensation, the errors for LLaMA-2 7B and LLaMA-3.1 8B are reduced to 2.9\% (11.09/382.87) and 6.4\% (5.76/89.58) of the original errors without compensation, respectively.
This clearly demonstrates the effectiveness of the linear compensation.
Please refer to \cref{sec:more_linear_comp} for more theoretical insights on the linear compensation.

To better train the linear transformations, we adopt a two-stage post-train pipeline. In the first stage, we only fine-tune the $W_c$ weights \textit{with early stop} and keep other weights frozen. 
We then conduct a full fine-tuning in the second stage. We formalize a detailed pipeline in \cref{alg:example} in the Appendix. 

\begin{figure*}[t]
\vskip 0.2in
\begin{center}
\centerline{\includegraphics[width=0.99\linewidth]{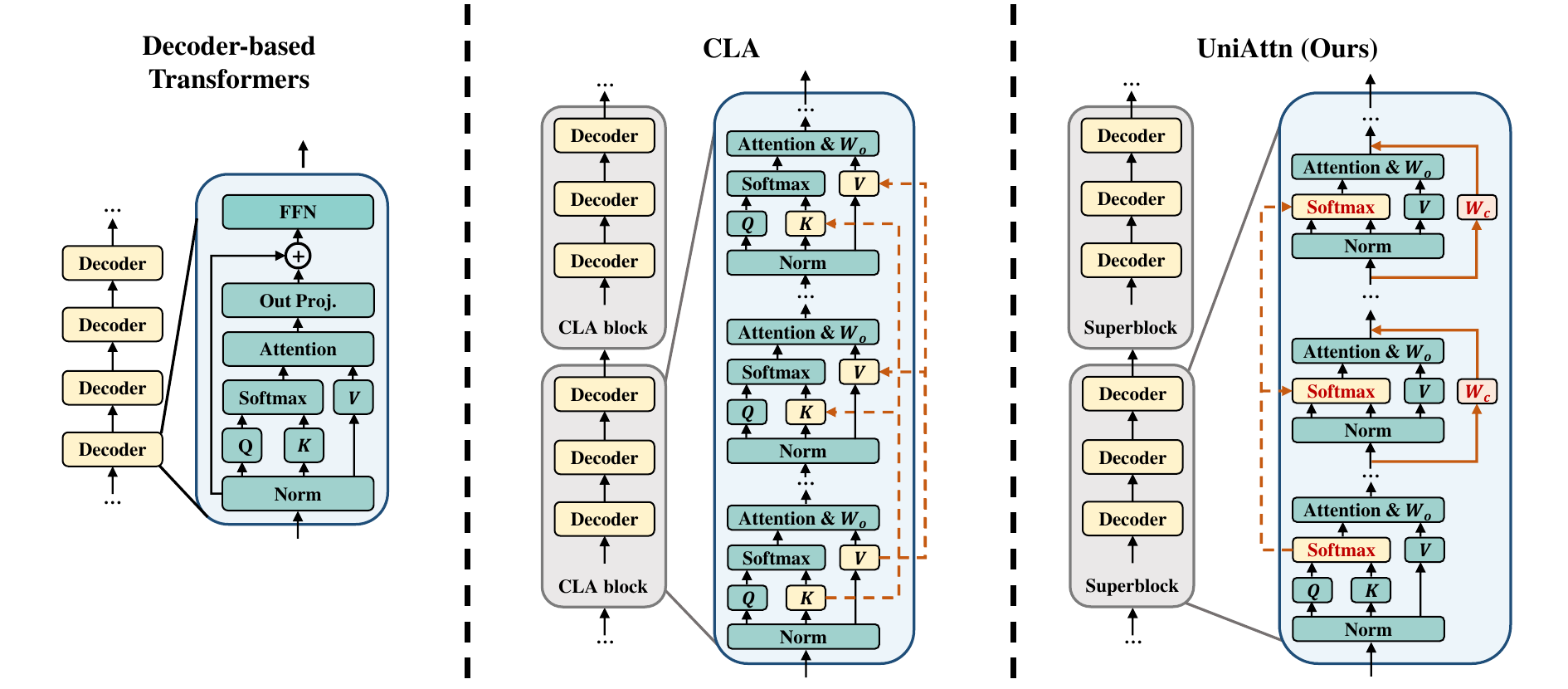}}
\caption{Pipeline comparison between standard decoder-based transformers, CLA \cite{CLA} (block size 3), and UniAttn (Superblock size 3). UniAttn shares the Softmax activations across layers in grouped Superblocks and adds linear transformation $W_c$ to compensate for the unification error. For simplicity, only the self-attention calculation is illustrated for CLA and UniAttn.}
\label{fig:pipeline}
\end{center}
\vskip -0.2in
\end{figure*} 

\subsection{Theoretical Analysis on Existing Methods} 
\label{sec:analysis_method}

Existing KV-cache sharing methods can be mainly categorized as intra-layer KV sharing and cross-layer KV sharing methods. Intra-layer KV sharing methods can be directly combined with UniAttn (see \cref{sec:exp} for detailed results). 
While cross-layer KV sharing methods achieve a more aggressive memory reduction rate, through theoretical analysis, we show that it diminishes the impact of model depth on model activations through directly sharing the same KV-cache across layers. This negatively affects the capabilities of pre-trained models, as model depth is a critical factor in determining their overall performance \cite{DeepNet}.

Existing research \cite{razzhigaev-etal-2024-transformer} has shown that LLM layers can be approximated as linear transformations. Together with the conslusion on training gradient dynamics \cite{DBLP:conf/icml/XiongYHZZXZLWL20},
we first give a reasonable assumption:
\begin{assumption}
\label{assum:inftesimal}
In top layers of \textbf{pre-trained} Pre-Norm LLMs:
\begin{equation}
    \mathbf{x}_{i+1}=\mathbf{x}_{i}+\delta_i,\quad \text{where}\quad||\delta_i||\ll ||\mathbf{x}_{i}||.
\end{equation}
\end{assumption}
We define $\delta$ as the ``depth factor'' that model applies on activations for further discussion. $\delta$ represents the extent to which model depth influences activations. \emph{Please refer to \cref{append:theo_insights} for theoretical insights on proposing this assumption.}

\textbf{Analysis on Cross-layer KV Sharing:} We adopt \cref{assum:inftesimal} to analyze the cross-layer KV sharing architecture, in which KV-cache is shared across multiple consecutive layers. Suppose layer $i+1$ shares the KV-cache from layer $i$, the MHA operation in layer $i+1$ can be written as:
\begin{equation}
\label{eq:cla}
    \mathbf{x}'_{i+1}=\text{softmax}(\frac{QK^T_{i}}{\sqrt{d_k}})V_iW_{o,i+1}+\mathbf{x}_{i+1},
\end{equation}
where $K_i$ and $V_i$ are the $K$ and $V$ matrices from layer $i$. We propose that:
\begin{proposition}
\label{prop:effective_depth}
    In \textbf{pre-trained} Pre-Norm LLMs, cross-layer KV sharing diminishes the ``depth factor'' $\delta$ on activations.
\end{proposition}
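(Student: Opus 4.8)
The plan is to track, layer by layer, the residual-stream update that cross-layer KV sharing produces and to show that reusing $K_i,V_i$ suppresses precisely the part of the update that carries new, depth-dependent information. I work throughout in the top-layer regime of \cref{assum:inftesimal}, where $\mathbf{x}_{i+1}=\mathbf{x}_i+\delta_i$ with $\|\delta_i\|\ll\|\mathbf{x}_i\|$, so that, following the linear-layer approximation of \cite{razzhigaev-etal-2024-transformer}, each top layer acts as a near-identity linear map and all quantities may be expanded to first order in $\delta$.

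First I would isolate the attention readout $r_i=s_iV_i$ taken before the output projection, writing the original update of layer $i+1$ as $\Delta_{i+1}=s_{i+1}V_{i+1}W_{o,i+1}$ and the shared update of \cref{eq:cla} as $\tilde\Delta_{i+1}=\tilde s_{i+1}V_iW_{o,i+1}$ with $\tilde s_{i+1}=\text{softmax}(Q_{i+1}K_i^T/\sqrt{d_k})$. The key observation is that every row of $\tilde s_{i+1}V_i$ is a convex combination of the rows of the shared matrix $V_i$, exactly as for layer $i$'s own readout $r_i=s_iV_i$; hence the shared readout lies in the linear span of the rows of $V_i$, the same subspace that layer $i$ has already written into, whereas the original readout $s_{i+1}V_{i+1}$ may reach directions in the span of the layer-specific $V_{i+1}$. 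Since both are mapped by the common projection $W_{o,i+1}$, this subspace comparison transfers directly to the updates.

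Next I would decompose $\Delta_{i+1}$ into a component inside the span of $V_i$ and a component orthogonal to it, the latter being the genuinely new direction that depth is meant to contribute. Expanding $\tilde s_{i+1}$ around $s_i$ to first order in $\delta_i$, I expect to show that cross-layer sharing removes this orthogonal component, so that the net increment $\delta_{i+1}=\mathbf{x}_{i+2}-\mathbf{x}_{i+1}$ collapses onto the subspace already populated by layer $i$ and is thereby diminished in the precise sense of losing its depth-carrying direction. I would then contrast this with UniAttn, whose update $s_iV_{i+b}W_{o,i+b}$ retains the layer-specific values $V_{i+b}$ and is therefore free to write new directions, making explicit that it is the sharing of $V$, not of the attention pattern $s$, that erodes the depth factor.

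The main obstacle is that the cross-layer distinction is carried by the per-layer weight matrices $W_{Q},W_{K},W_{V}$, which differ at $O(1)$ and therefore do not cancel through \cref{assum:inftesimal} alone; the argument cannot rely on $\tilde s_{i+1}\approx s_i$ but must instead quantify the shared-value subspace constraint while controlling the softmax nonlinearity and the normalization inside $\text{Norm}(\cdot)$. Turning ``diminishes'' into a quantitative statement --- bounding the orthogonal energy of $\delta_{i+1}$ against that of the original update --- is the delicate step, which I would handle through a first-order Jacobian expansion of the softmax that is legitimate because $\delta$ is infinitesimal by \cref{assum:inftesimal}.
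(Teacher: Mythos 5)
There is a genuine gap. The paper's proof is a \emph{sensitivity} argument, not a subspace argument: since $K_i$ and $V_i$ are frozen from layer $i$, the increment $\delta_i$ enters layer $(i{+}1)$'s attention \emph{only through the query}, giving $\mathbf{x}'_{i+1}=\bigl[\text{softmax}(A_i)+J(A_i)\hat{\delta}_i\bigr]V_iW_{o,i+1}+\mathbf{x}_{i+1}+o(\hat{\delta}_i^2)$ with $A_i=\mathbf{x}_iW_{q,i+1}K_i^T/\sqrt{d_k}$, and the decisive step is a bound on the softmax Jacobian itself: using the attention-sink logit pattern of StreamingLLM, $\|J(A_i)\|\approx 0.03\ll 1$ for $l=1024$ and shrinks with $l$, so the $\delta$-dependent term is doubly damped (small $\hat{\delta}_i$ times a small Jacobian) and the attention output is essentially a function of $\mathbf{x}_i$ alone. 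Your proposal contains the right opening move (first-order softmax expansion in $\delta$, and the correct diagnosis that sharing $V$ rather than sharing the attention pattern is what erodes depth), but it never supplies the Jacobian estimate, which is the actual mathematical content of the proof; a bare first-order expansion only shows the perturbation is $O(\|\hat{\delta}_i\|)$, which does not by itself establish that it is negligible relative to $\text{softmax}(A_i)$, whose rows have unit $\ell_1$ norm.

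Your central mechanism --- confining the shared readout to the ``linear span of the rows of $V_i$'' --- would also fail as stated. For sequence length $l\ge d_v$ the rows of $V_i$ generically span the entire value space, so the span constraint is vacuous; the genuine constraint is the convex hull of those rows, and producing a \emph{different} convex combination of the same value vectors does not imply the update carries no new depth-dependent direction. Moreover, layer $i$'s own contribution to the residual stream passes through $W_{o,i}$ while the shared layer's passes through $W_{o,i+1}$, so ``the subspace layer $i$ already wrote into'' is not the set you compare against. Finally, you correctly flag that $\tilde{s}_{i+1}\approx s_i$ cannot be justified because $W_{Q,i}$ and $W_{Q,i+1}$ differ at $O(1)$, but your proposed remedy (a first-order Jacobian expansion, valid only for the $O(\delta)$ perturbation) cannot bridge that $O(1)$ gap. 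The paper sidesteps this entirely by never comparing to $s_i$: it expands $\text{softmax}$ around $\text{softmax}(\mathbf{x}_iW_{q,i+1}K_i^T/\sqrt{d_k})$, i.e., it perturbs only the \emph{input} $\mathbf{x}_{i+1}\mapsto\mathbf{x}_i$ while keeping layer-$(i{+}1)$'s own query weights, which is exactly what \cref{assum:inftesimal} licenses.
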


Please refer to \cref{sec:effective_depth_proof} for detailed proof.
\Cref{prop:effective_depth} gives a rigorous proof showing that cross-layer KV sharing diminishes the impact of model depth on model activations. To offer an intuitive understanding of how KV sharing alters the structure of QKV computation within self-attention, we reformulate the operation as follows:
\begin{equation}
\label{eq:cla_equiv}
\begin{aligned}
    \text{QKV-Comp}_{i+1}
    &=\text{softmax}(A_i)V_i\\
    &=\text{softmax}(\frac{\mathbf{x}_{i}W_{q,i+1}K^T_{i}}{\sqrt{d_k}})V_i.
\end{aligned}
\end{equation}
Compared to QKV-comp$_i$ in the previous layer, the only difference lies in the use of distinct $W_q$ matrices, suggesting that the QKV computation in layer $i+1$ can be viewed as introducing an additional query head applied to the same KV pairs as in layer $i$. In this sense, \textit{cross-layer KV sharing implicitly transforms QKV computation in different layers into additional query heads}.

\subsection{Discussion on UniAttn}
\label{sec:discussion}
Our UniAttn reduces both memory cost and inference latency simultaneously. 
We underscore that unlike cross-layer KV sharing, our UniAttn does not diminish the ``depth factor'' $\delta$ in the forward pass. Please refer to \cref{sec:uniattn_depth_proof} for detailed proof.
\begin{proposition}
\label{prop:uniattn_depth}
    In pre-trained Pre-Norm LLMs, the ``depth factor'' in UniAttn is unignorable. 
\end{proposition}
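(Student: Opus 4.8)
The plan is to isolate the part of the depth factor that KV sharing actually touches. Since UniAttn modifies only the MHA branch, I would write the per-layer increment at a Softmax-sharing layer $i+b$ as $\delta_{i+b}^{\mathrm{Uni}} = s_i V_{i+b} W_{o,i+b}$, where the attention matrix $s_i$ is inherited from the bottom layer of the Superblock but the value and output projections $V_{i+b}=\mathbf{x}_{i+b}W_{v,i+b}$ and $W_{o,i+b}$ remain layer-specific. The corresponding increment in the original model is $\delta_{i+b}^{\mathrm{ori}} = s_{i+b}V_{i+b}W_{o,i+b}$, with $s_{i+b}$ the original (unshared) Softmax activation. The target statement ``unignorable'' I would make precise as $\|\delta_{i+b}^{\mathrm{Uni}}\| = \Theta(\|\delta_{i+b}^{\mathrm{ori}}\|)$, i.e. the genuine depth contribution survives up to a small multiplicative factor, in sharp contrast to \cref{prop:effective_depth} where the cross-layer increment is suppressed.

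First I would control the deviation $\delta_{i+b}^{\mathrm{ori}} - \delta_{i+b}^{\mathrm{Uni}} = (s_{i+b}-s_i)V_{i+b}W_{o,i+b}$ using the Observation of \cref{sec:uniattn}: across the top half of layers the shared and original Softmax activations satisfy $\cos(s_i,s_{i+b})>0.9$, so $\|s_{i+b}-s_i\|$ is small relative to $\|s_{i+b}\|$. Because $V_{i+b}$ and $W_{o,i+b}$ are left untouched, this yields $\delta_{i+b}^{\mathrm{Uni}}\approx \delta_{i+b}^{\mathrm{ori}}$, and a triangle-inequality bound $\|\delta_{i+b}^{\mathrm{Uni}}\|\ge \|\delta_{i+b}^{\mathrm{ori}}\| - \|(s_{i+b}-s_i)V_{i+b}W_{o,i+b}\|$ keeps the UniAttn depth factor within a small neighborhood of the genuine one, hence bounded away from zero.

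To explain why UniAttn and cross-layer sharing behave oppositely, I would compare the UniAttn computation $s_iV_{i+b}$ with the CLA reformulation in \cref{eq:cla_equiv}. CLA freezes the aggregated content $V_i$ (stale values through the stale weights $W_{v,i}$) and varies only the query, so the composed operation collapses to an additional query head on layer $i$'s KV and adds no genuine depth. UniAttn instead freezes the attention pattern $s_i$ but recomputes fresh values through a distinct $W_{v,i+b}$; crucially, even to leading order in $\delta$ (where $\mathbf{x}_{i+b}\approx\mathbf{x}_i$) the UniAttn value term $\mathbf{x}_i W_{v,i+b}$ does not reduce to $V_i=\mathbf{x}_iW_{v,i}$, since $W_{v,i+b}\neq W_{v,i}$ are generically non-proportional. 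Thus the increment carries new, depth-specific content that cannot be absorbed into layer $i$ as a mere query head.

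The main obstacle is formalizing ``unignorable'' quantitatively while staying consistent with the small-$\delta$ regime of \cref{assum:inftesimal}: because all hidden states inside a Superblock are close, a careless argument could collapse UniAttn exactly as it collapses CLA. The resolution I would pursue is to anchor the estimate on the empirically observed similarity of the Softmax activations (\cref{fig:softmax_sim}) --- which bounds precisely the term that UniAttn approximates --- rather than on input similarity, leaving the fresh value and output projections (the actual carriers of the depth factor) intact. A secondary subtlety is that a genuine lower bound on $\|\delta_{i+b}^{\mathrm{Uni}}\|$ requires a mild non-degeneracy assumption on $W_{v,i+b}W_{o,i+b}$, ensuring the original increment $\|\delta_{i+b}^{\mathrm{ori}}\|$ is itself bounded away from zero.
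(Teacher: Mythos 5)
Your formalization of ``unignorable'' does not match what the paper means by the depth factor, and the resulting argument proves the wrong statement. In the paper the depth factor is $\delta_i=\mathbf{x}_{i+1}-\mathbf{x}_i$ from \cref{assum:inftesimal}, and \cref{prop:effective_depth}/\cref{prop:uniattn_depth} are about how strongly the layer's attention output \emph{depends on} $\delta_i$, not about whether the norm of the MHA branch output stays comparable to the original model's. The paper's proof is a two-line substitution: writing the UniAttn update as $s_i(\mathbf{x}_i+\delta_i)W_{v,i+1}W_{o,i+1}+\mathbf{x}_{i+1}$, the $\delta_i$ piece passes through the value path \emph{linearly}, so (invoking \cref{prop:similarity} to argue $\mathbf{x}_i$ and $\delta_i$ are roughly parallel, hence the linear maps do not distort the ratio) its contribution retains the relative magnitude $\|\delta_i\|/\|\mathbf{x}_i\|$; in CLA, by contrast, $\delta_i$ enters inside the softmax and is additionally attenuated by the Jacobian with $\|J(A_i)\|\approx 0.03$. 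Your criterion $\|\delta^{\mathrm{Uni}}_{i+b}\|=\Theta(\|\delta^{\mathrm{ori}}_{i+b}\|)$ does not separate the two cases: the proof of \cref{prop:effective_depth} shows CLA's branch output differs from a $\delta$-independent quantity only by the tiny $J(A_i)\hat{\delta}_i$ term, so CLA's branch output also has norm comparable to something nonzero, and an argument based on output closeness would equally ``prove'' that CLA preserves depth. Closeness of outputs does not imply preserved sensitivity to $\delta_i$: a layer can approximate the original well while being, to first order, a function of $\mathbf{x}_i$ alone --- that is exactly the CLA pathology.

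Your third paragraph contains the right ingredient --- the values are recomputed from the \emph{current} hidden state --- but you then explicitly pass ``to leading order in $\delta$ (where $\mathbf{x}_{i+b}\approx\mathbf{x}_i$)'', which discards precisely the term the proposition is about, and you replace it with the observation that $W_{v,i+b}\neq W_{v,i}$. That observation cannot carry the argument: CLA also uses fresh $W_{q,i+1}$ and $W_{o,i+1}$, yet the paper still counts it as collapsing into an extra query head. The repair is to keep the first-order term rather than the weight mismatch: the UniAttn value is $(\mathbf{x}_i+\delta_i)W_{v,i+b}$, the $\delta_i$ component reaches the output undamped with relative weight $\|\delta_i\|/\|\mathbf{x}_i\|$, and comparing this against the factor $\|J(A_i)\|\ll 1$ in \cref{eq:cla_equiv} is the entire content of the paper's proof. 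Your reliance on the empirical similarity $\cos(s_i,s_{i+b})>0.9$ is also shakier than it looks for a lower bound: a cosine of $0.9$ still allows $\|s_{i+b}-s_i\|\approx 0.45\,\|s_i\|$, which after amplification by $V_{i+b}W_{o,i+b}$ need not leave your triangle inequality with a positive right-hand side.
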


\begin{table*}[t]
    \centering
    \resizebox{0.9\textwidth}{!}{\begin{tabular}{c|cc|ccc|c|cc|c}
\toprule
\multirow{2}{*}{Method} & \multirow{2}{*}{\makecell{TTFT \\ (s)}} & \multirow{2}{*}{\makecell{KV \\ Cache}} & \multicolumn{4}{c|}{Medical} & \multicolumn{3}{c}{General}    \\ 
&  & & PubMedQa   & MedMCQA  & \multicolumn{1}{c}{MedQA} & AVG &  SIQA  & \multicolumn{1}{c}{CommonsenseQA} & AVG  \\ \midrule
\multicolumn{10}{c}{\textbf{LLama3.1-8B (w/ GQA)}}  \\ 
\midrule
Pre-Trained & 2.33 & 100\%  & 75.0   & 56.8  & 59.1   & 63.6 & 46.8 & 71.7  & 59.3 \\
Post-Train & 2.33 & 100\%  & 75.4   & 59.0  & 63.2   & 65.9 & 50.1 & 73.9  & 62.0 \\ \hline
LLMDrop-Half & 1.77 & 81.3\%  & 75.6   & \underline{57.8}  & \underline{61.5}   & \underline{65.0} & 51.4 & \underline{75.0}  & \underline{63.2}  \\ 
LLMDrop-Full & \textbf{1.47} & 62.5\%  & \underline{78.2}   & 53.6  & 55.9   & 62.6 & \textbf{51.8} & 73.3  & 62.6 \\ 
CLA-Half & 2.29 & 81.3\%  & 73.8   & \textbf{58.1}  & \textbf{62.8}   & 64.9 & \underline{51.7} & 74.7  & \underline{63.2} \\ 
CLA-Full & 2.22 & 62.5\%  & 75.4   & 56.9  & 59.2   & 63.8 & 50.7 & 71.2  & 61.0 \\ 
UniAttn (Ours) & \underline{1.48} & 81.3\%  & \textbf{79.0}   & 57.6  & 59.5   & \textbf{65.4} & 51.1 & \textbf{75.6}  & \textbf{63.4} \\ 
\midrule
\multicolumn{10}{c}{\textbf{LLama2-7B (w/o GQA)}}  \\ 
\midrule
Pre-Trained & 2.20 & 100\%  & 71.4   & 32.2  & 34.7   & 46.1 & 50.2 & 51.1  & 52.7 \\
Post-Train & 2.20 & 100\%  & 75.4   & 48.8  & 49.2   & 57.8 & 51.6 & 66.3  & 59.0 \\ \hline
LLMDrop-Half & 1.74 & 81.3\%  & \underline{75.2}   & 48.4  & 49.8   & \underline{57.8} & \underline{51.7} & \underline{66.7} & \underline{59.2}  \\ 
LLMDrop-Full & \textbf{1.37} & 62.5\%  & 75.0   & \underline{48.8}  & 48.8 & 57.5 & 51.2 & 65.2  & 58.2 \\ 
CLA-Half & 2.20 & 81.3\%  & 74.4   & 48.4  & \textbf{50.6}   & \underline{57.8} & 51.3 & 66.1  & 58.7 \\ 
CLA-Full & 2.23 & 62.5\%  & \underline{75.2}   & 45.7  & 47.5   & 56.1 & 50.2 & 65.6  & 57.9 \\ 
UniAttn (Ours) & \underline{1.44} & 81.3\%  & \textbf{75.6}   & \textbf{49.4}  & \underline{50.1}   & \textbf{58.4} & \textbf{51.9} & \textbf{67.7}  & \textbf{59.8} \\ 
\bottomrule
 \end{tabular}}
 \caption{Post-training performance (\%) comparison. \textbf{Bold} and \underline{underline} denote the best and second-best performance of compressed models. We also report time to first token (TTFT) and KV-cache retain rate (KV Cache). }
 \label{tab:main}
\end{table*}

\section{Experiments}
\label{sec:exp}

\subsection{Experimental Settings}
\textbf{Post-Training Settings.} 
We consider two post-training settings to validate our UniAttn: fine-tuning on domain-specific datasets and on general instruction following datasets. 
We choose PMC \cite{PMC} as the domain-specific dataset and evaluate fine-tuned models on PubMedQA \cite{PubMedQA}, MedMCQA \cite{MedMCQA}, and MedQA \cite{MedQA}.
We choose Tulu3 \cite{TULU3} as the general instruction following dataset and evaluate fine-tuned models on SIQA \cite{SIQA} and CommonsenseQA \cite{CommonsenseQA}. 
We employ \cite{eval-harness} to perform all benchmark evaluations.

\textbf{Model.} 
We post-train 4 open-source pre-trained LLMs for evaluating our UniAttn: LLaMA-2 7B \cite{touvron2023llama}, LLaMA-3.1 8B \cite{LLaMA3}, Mistral 7B \cite{Mistral}, and Gemma-2 9B \cite{Gemma2}. We use base models that have undergone only pre-training.

\textbf{Implementation details.} 
We adopt the pattern in \cref{fig:softmax_sim} and apply UniAttn in the \textit{top half of layers}. Unless otherwise noted, we adopt Superblock size $=4$, which yields a total of 4 Superblocks in LLaMA-2 7B, LLaMA-3.1 8B, Mistral 7B, and 5 Superblocks in Gemma-2 9B. For all experiments, we post-train LLMs for 1 epoch \cite{TULU3}. We adopt all training hyperparameter values based on existing research. All training experiments were conducted on 8 H800 GPUs. For time to first token (TTFT) time measurement, we use a single H800 GPU and a context length of 8,192 and measure the time to generate the first token after receiving the sequence input. See the Appendix for more details. 

\subsection{Performance Comparison on Post-Training}

We verify the effectiveness of UniAttn on 2 open-source LLMs with different architectures, namely LLaMA-3.1 8B (with GQA) \cite{LLaMA3} and LLaMA-2 7B (without GQA) \cite{touvron2023llama}. 
We use LLMDrop \cite{LLMDrop} and CLA \cite{CLA} as competing efficient architectures, as both methods exploit the cross-layer redundancies in LLMs. For LLMDrop, we bypass the MHA modules with the highest input-output similarities in the top half of the layers. For CLA, we group the top half of the layers into CLA blocks, using the same configuration as in the grouping of Superblocks.
Since both competing methods drop the entire KV-cache in the operated layers, we compare two configurations: one with the same operating layers (X-Full) and another with the same KV-cache compression ratio (half operating layers, X-Half). We also conduct post-training after applying these methods.
Same conclusions are observed on Mistral 7B \cite{Mistral} and Gemma-2 9B \cite{Gemma2}. Detailed results are in the Appendix due to the page limit. 

\textbf{Main Conclusion.} We compare different efficient architectures against directly post-training the original LLM (Post-Train) and using the pre-trained model (Pre-Trained) as baselines, as shown in \cref{tab:main}. Among all competing architectures, our UniAttn achieves the best overall performance across model structures (with and without GQA) and post-training datasets. Besides significantly reducing inference costs in both time and memory, UniAttn maintains comparable performance to directly post-training the original model, well demonstrating its effectiveness.
When comparing UniAttn with LLMDrop-Full, we observe that Softmax unification provides a similar acceleration rate to bypassing the entire MHA module, but it achieves significantly better post-training performance. This aligns with our finding that Softmax operations are both costly and redundant. When comparing to CLA, UniAttn achieves better performance with significantly lower inference latency, demonstrating that the UniAttn architecture is better suited for post-training than cross-layer KV sharing. 

\textbf{Analysis of CLA.} 
Since CLA only reduces the latency associated with KV projections, its impact on TTFT is minimal. As a result, CLA-Half, CLA-Full, and the original model exhibit comparable TTFT times.
Furthermore, CLA achieves comparable performance to LLMDrop when applied to the same number of layers. This suggests that the cross-layer attention mechanism employed by CLA is structurally equivalent to directly bypassing the MHA module. This observation supports our theoretical analysis that cross-layer KV sharing methods like CLA can diminish the depth factor in pre-trained LLMs, limiting their performance. 

\subsection{Experimental Analysis}
We choose LLaMA-3.1 8B \cite{LLaMA3} and the medical domain to conduct experimental analysis. See \cref{sec:append_exp} for more analysis.

\begin{table*}[t]
    \centering
    \resizebox{0.9\textwidth}{!}{\begin{tabular}{c|ccc|cccc}
\toprule
Method & Unify Softmax & Compensation & Init $W_c$ & PubMed & MedMCQA & MedQA & AVG \\
\midrule
Post-Train & - & - & - & 75.4 & 59.0  & 63.2 & 65.9 \\ \hline 
\multirow{4}{*}{UniAttn} & $\surd$ & $\times$ & $\times$ & 77.0 & 55.7 & \underline{58.8} & 63.8 \\
 & $\surd$ & $\surd$ & zero-init & 78.0 & 55.8 & 58.1 & 63.9 \\
 & $\surd$ & $\surd$ & \cref{theorem:init} & \underline{78.2} & \underline{57.1} & 58.4 & \underline{64.6} \\
 & $\surd$ & $\surd$ & \cref{theorem:init} + fine-tune & \textbf{79.0} & \textbf{57.6} & \textbf{59.5} & \textbf{65.4} \\
\bottomrule
\end{tabular}}
\caption{Ablation study results. \textbf{Bold} and \underline{underline} denote the best and second-best UniAttn performance.}
\label{tab:ablation}
\end{table*}

\begin{table}[t]
    \centering
    \resizebox{\columnwidth}{!}{\begin{tabular}{c|ccc|c}
        \toprule
        Method & TTFT (s) & H$_2$O Comp. & KV-Cache & AVG \\
        \midrule
        Post-Train & 2.33 & - & 100\% & 65.9 \\
        CLA-Half & 2.29 & - & 81.3\% & 64.9 \\
        CLA-Full & 2.22 & - & 62.5\% & 63.8 \\ \midrule
        UniAttn & 1.48 & - & 81.3\% & 65.4 \\
        \multirow{3}{*}{UniAttn+H$_2$O} & 1.48 & 20\% & 65.0\% & 65.2 \\
        & 1.46 & 40\% & 48.8\% & 65.0 \\
        & 1.46 & 60\% & 32.5\% & 63.3 \\
        \bottomrule
    \end{tabular}}
    \caption{Results of inference costs analysis. ``H$_2$O Comp.'' denotes the compression rate introduced by H$_2$O and ``KV-Cache'' denotes the KV-Cache retain rate.}
    \label{tab:uniattn_h2o}
\end{table}

\begin{figure}[t]
    \centering
    \includegraphics[width=0.7\linewidth]{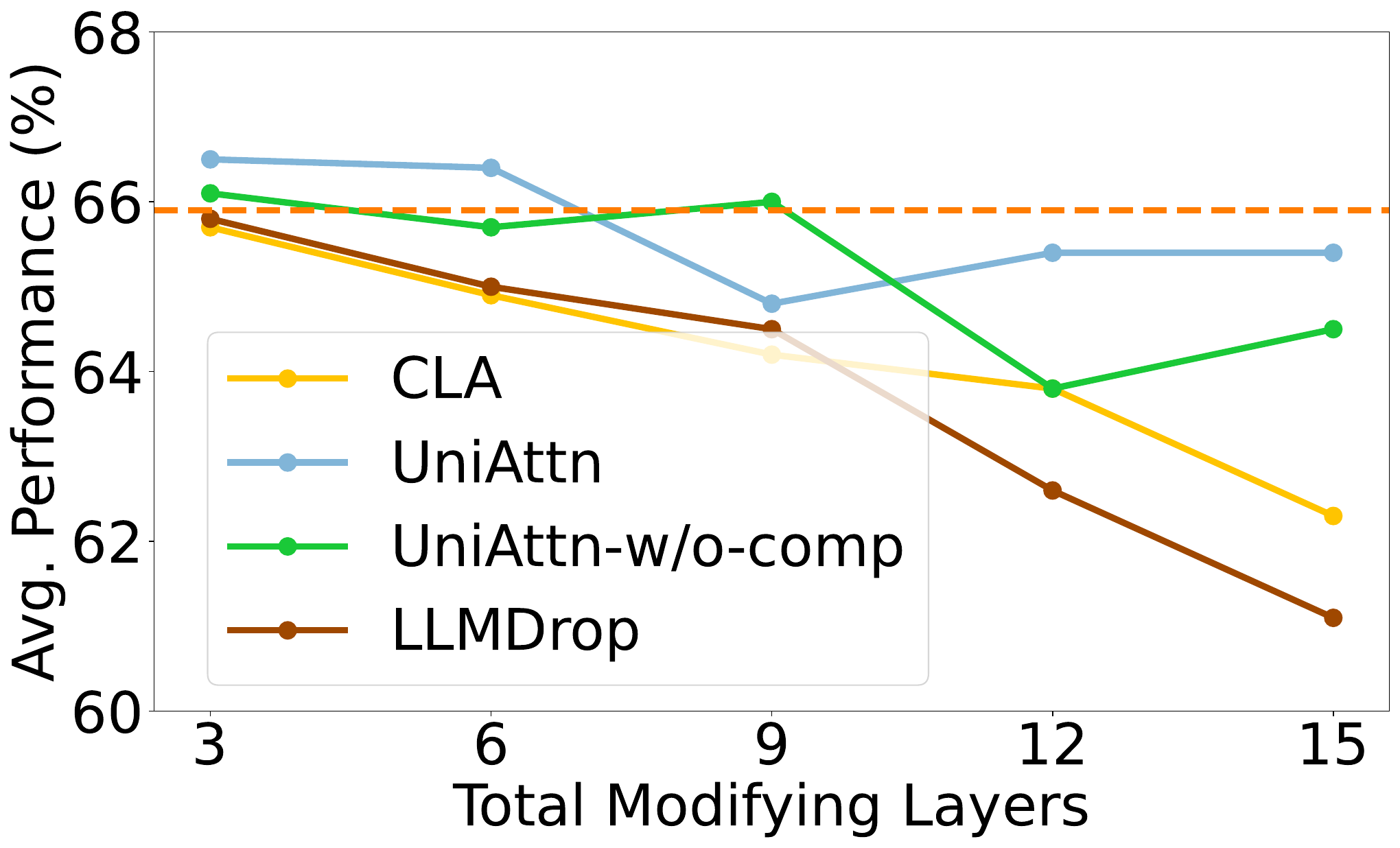}
    \caption{Comparison between baselines, UniAttn, and UniAttn without (``w/o'') linear compensation.}
    \label{fig:wo_proj}
\end{figure}


\begin{figure}[t]  
    \begin{minipage}{1.0\columnwidth}  
        \centering
        \begin{minipage}{0.49\textwidth}  
            \centering
            \includegraphics[width=\textwidth]{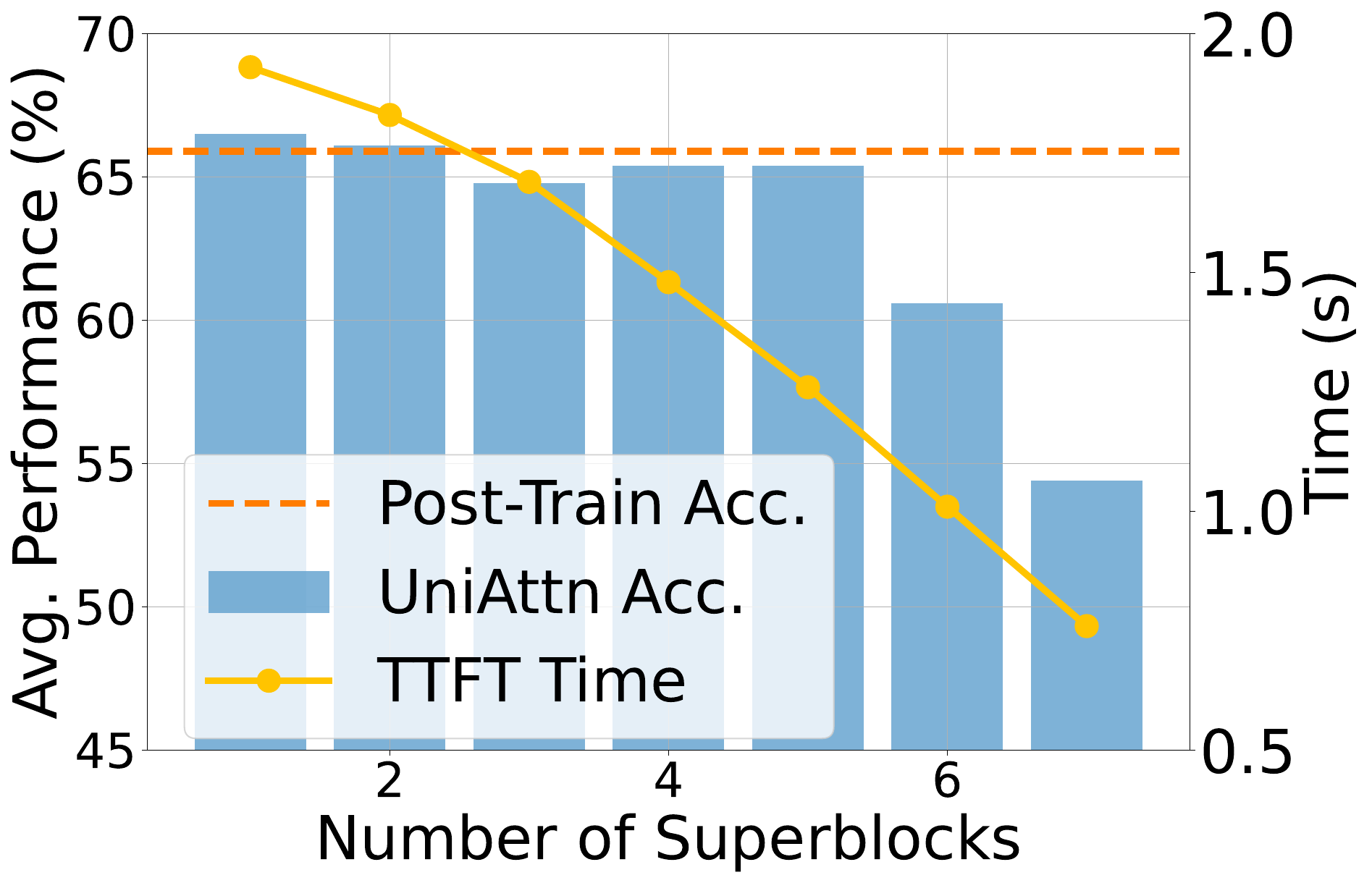}
        \end{minipage}%
        \hfill
        \begin{minipage}{0.49\textwidth}  
            \centering
            \includegraphics[width=\textwidth]{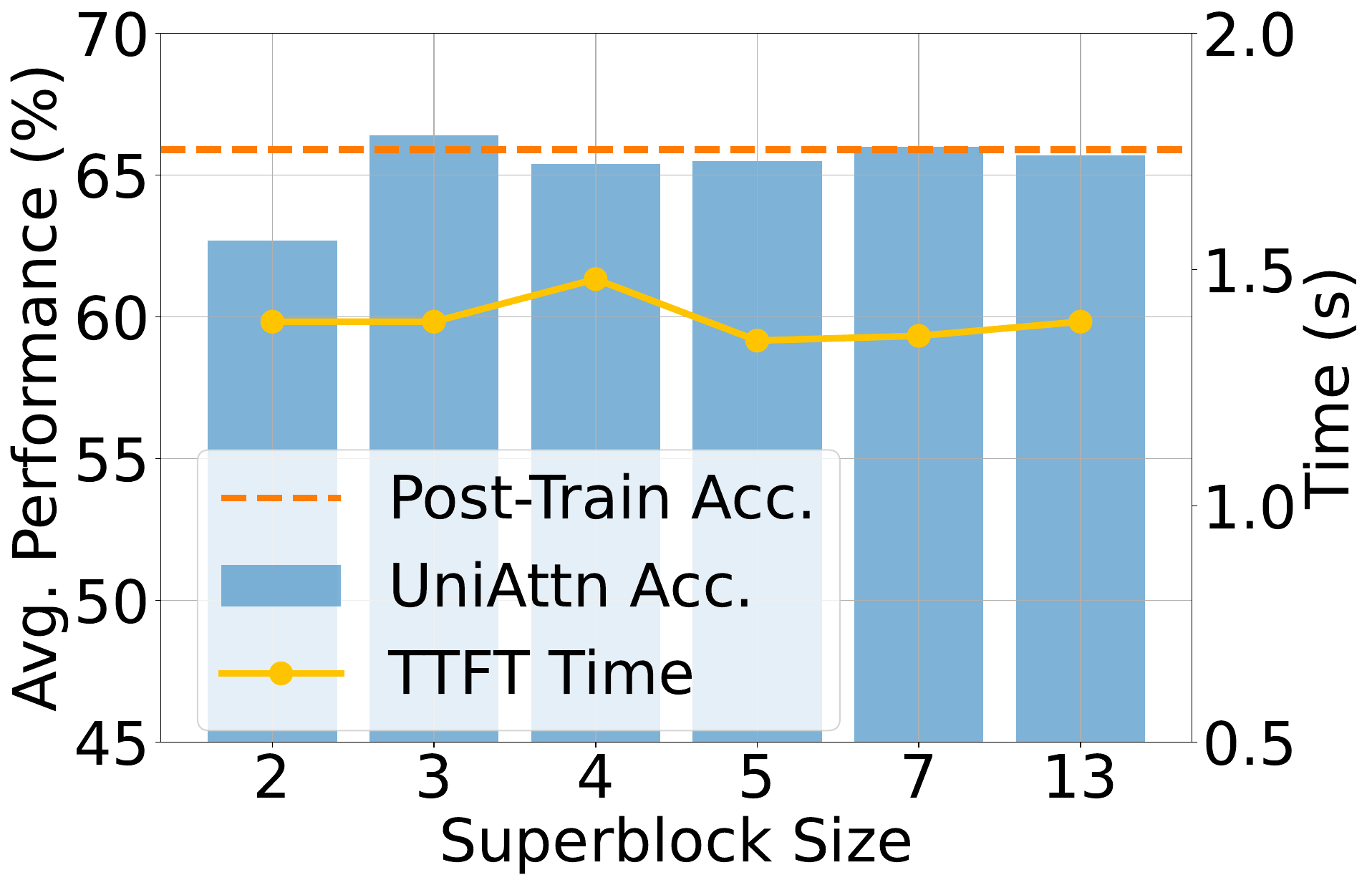}
        \end{minipage}
        \caption{Hyperparameter analysis results on average accuracy (\%) and TTFT latency (s). Left: Different total numbers of grouped Superblocks (each with size 4). Right: Different Superblock sizes (total of 12 layers that utilize unified Softmax activation). }
        \label{fig:hyperparam}
    \end{minipage}%
\end{figure}

\textbf{Ablation Studies.}
We do controlled experiments to evaluate the impact of each component in UniAttn, namely Softmax unification, linear compensation $W_c$, and the initialization for $W_c$. The results are shown in \cref{tab:ablation}.
We observe that after unifying the Softmax activations, both adding linear compensation and using proper initialization contribute positively to the final performance. 
While linear compensation consistently improves performance across different initialization methods, appropriate initialization proves critical for effectively fine-tuning $W_c$.

\textbf{Inference Costs Analysis.}
We evaluate and compare inference costs to demonstrate the efficiency of UniAttn. To further maximize its potential, we integrate UniAttn with the KV-cache compression method H$_2$O \cite{H2O} to further reduce memory overhead by keeping only the essential and the recent tokens in the KV-cache. We report the TTFT latency and the total KV-cache retain rate. As shown in \cref{tab:uniattn_h2o}, UniAttn achieves a significant reduction in TTFT latency compared to CLA under the same KV-cache retain rate.
When combined with H$_2$O, UniAttn+H$_2$O reduces the KV-cache retain rate to below 50\% while maintaining an average performance of 65.0\%, achieving both better performance, lower inference latency, and higher KV-cache compression rate than CLA. 
These results clearly demonstrate the inference efficiency of UniAttn, highlighting its substantial benefits for real-world deployment scenarios.

\textbf{Plain Softmax Unification v.s. Baselines.}
To further substantiate the theoretical analysis and demonstrate the effectiveness of Softmax unification, we compare UniAttn (without linear compensation, UniAttn-w/o-comp) against CLA and LLMDrop. As shown in \cref{fig:wo_proj}, simply unifying Softmax activations yields notable improvements over baselines, highlighting its core contribution. The linear projection serves as an enhancement, bringing UniAttn closer to post-train-level results.

\textbf{Hyperparameter Analysis.}
We perform a hyperparameter analysis on the number and the size of Superblocks to evaluate their impact on latency and post-training performance.
First, we examine the effect of varying the number of Superblocks. Using a consistent Superblock size of 4, we group different numbers of Superblocks from the top layers to the bottom layers. For instance, grouping 2 Superblocks indicates that layer 25-28 and 29-32 form the two Superblocks.
The results are shown in \cref{fig:hyperparam} (left). From the perspective of TTFT latency, increasing the number of Superblocks significantly reduces latency. From the perspective of performance, while grouping certain top layers as Superblocks does not substantially affect model performance, continuing to increase the number of Superblocks leads to sharp performance drops.
Second, we analyze the impact of Superblock size. For fair comparison, we maintain \textit{a consistent total number of layers that utilize unified Softmax activations from preceding layers} as 12. Since a Superblock with size $b$ unifies Softmax in $b-1$ layers, we set $b$ such that $b-1$ is a factor of 12, resulting in Superblock sizes of 2, 3, 4, 5, 7, and 13, and number of Superblocks as 12, 6, 4, 3, 2, and 1, respectively. 
As shown in \cref{fig:hyperparam} (right), the size of the Superblocks does not significantly affect post-training performance. For simplicity, we use a Superblock size of 4, although fine-grained tuning of Superblock size could further improve post-training performance.

\begin{figure}[t]
    \centering
    \includegraphics[width=\linewidth]{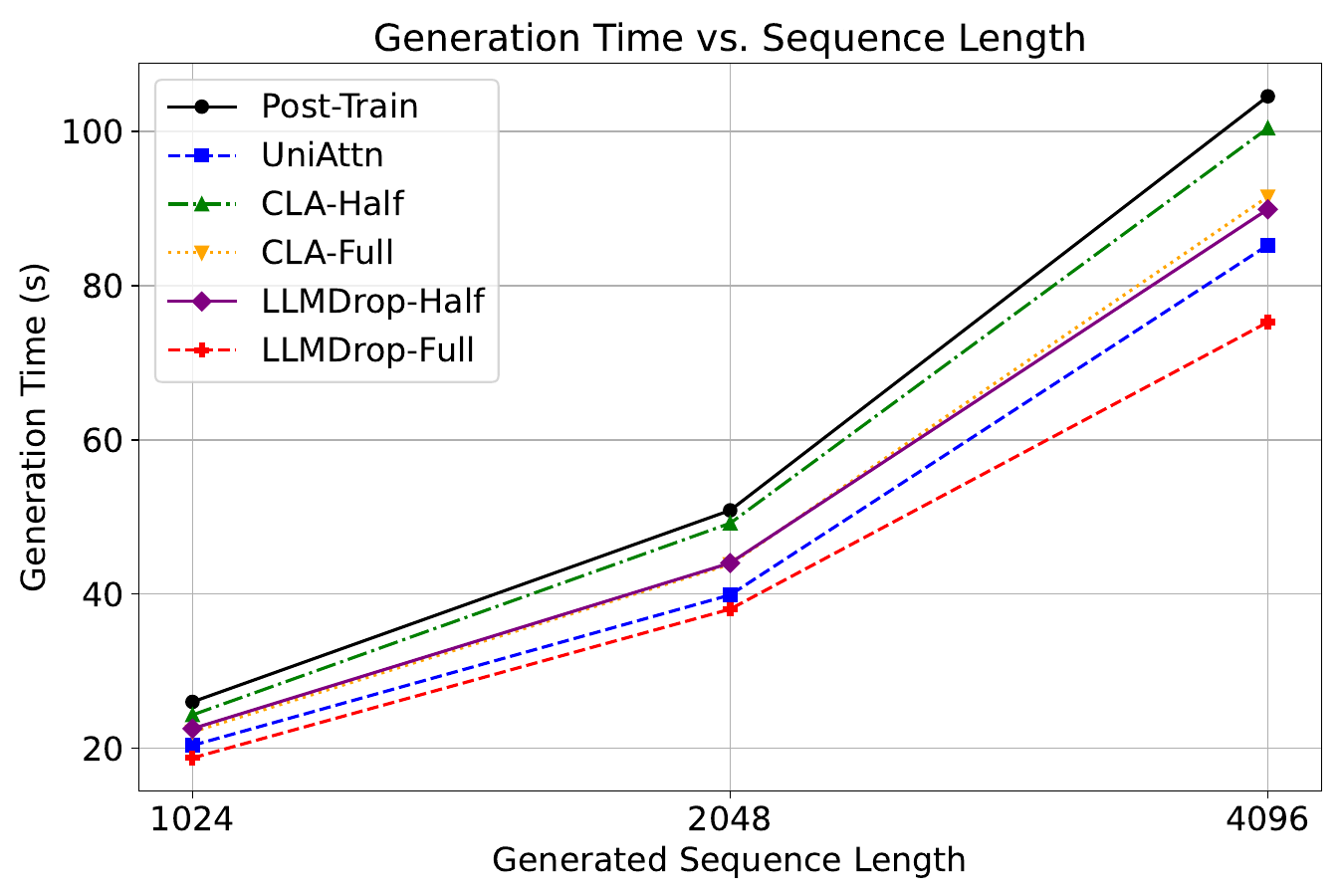}
    \caption{Generation efficiency (wall-clock time seconds) comparison on LLaMA-3.1 8B with 8192 context length.}
    \label{fig:generation}
\end{figure}

\begin{table}[t]
    \centering
    \vskip 0.15in
    \resizebox{\columnwidth}{!}{\begin{tabular}{c|cccc|c}
\toprule
\multirow{2}{*}{Method} & \multicolumn{4}{c|}{Sequence Length} & \multirow{2}{*}{AVG}    \\ 
& 4096   & 8192 & 16384  & 32768 \\ \midrule
Pre-Trained  & 82.67 & 0  & 0   & 0  & 20.67  \\
Post-Train  & \underline{86.67} & \textbf{87}  & \textbf{80.33}   & \textbf{72}  & \textbf{81.5} \\ \hline
LLMDrop-Half  & \textbf{87} & 83.67  & 77.67   & 68.67  & 79.25    \\ 
CLA-Half & \underline{86.67} & 85.67  & 78.67   & 69.33  & 80.08   \\ 
UniAttn (Ours) & \textbf{87} & \underline{86.67}  & \underline{79.67}   & \underline{70.33}  & \underline{80.92}   \\
\bottomrule
 \end{tabular}}
 \caption{Post-training performance (\%) comparison on Niah Multikey in RULER \cite{RULER}. \textbf{Bold} and \underline{underline} denote the best and second-best performance. }
 \label{tab:ruler}
\vskip -0.1in
\end{table} 

\begin{table}[t]
    \centering
    \resizebox{\columnwidth}{!}{\begin{tabular}{cc|ccc|c}
\toprule
Method & Grouping & PubMed & MedMCQA & MedQA & AVG \\
\midrule
Post-Train & - & 75.4 & 59.0  & 63.2 & 65.9 \\ \hline 
\multirow{2}{*}{UniAttn} & Fixed size (4) & \textbf{79.0} & \textbf{57.6} & \underline{59.5} & \textbf{65.4} \\
 & Adaptive & 78.8 & 57.3 & \textbf{59.5} & 65.2 \\
\bottomrule
\end{tabular}}
\caption{Performance (\%) comparison of different Superblock grouping methods in UniAttn. \textbf{Bold} denotes the best performance.}
\label{tab:grouping}
\end{table}

\textbf{Generation Efficiency Analysis.}
To further analyze inference efficiency, we present the generation results of UniAttn against the comparing baselines, namely CLA-Half, CLA-Full, LLMDrop-Half, and LLMDrop-Full. We measure the end-to-end generation time with the LLaMA-3.1 8B backbone, and with generating 1024, 2048, and 4096 tokens using a context length of 8192. As shown in \cref{fig:generation}, across all comparing methods, UniAttn consistently achieves Top-2 latency, only slightly behind LLMDrop-Full where all operated layers are simply skipped. These results demonstrate that unifying Softmax operations can substantially reduce the inference overhead during generation.

\textbf{Generalization to Long-Context Modeling.}
To further validate the effectiveness of UniAttn on long context modeling scenarios, we follow the training recipe of \cite{LongAlign} and post-train LLaMA-2 7B, which lacks native support for extended contexts. We compare UniAttn with two strong baselines, CLA-Half and LLMDrop-Half, on the Niah Multikey task in the RULER benchmark \cite{RULER}. To successfully implement long-context modeling, here we conduct experiments \textit{with FlashAttention} \cite{Flashattention} applied. As shown in \cref{tab:ruler}, UniAttn consistently ranks in the top two (second only to full post-training) and outperforms all competing methods, well demonstrating its strong generalization ability to complex long-context tasks. This also suggests that our UniAttn is compatible with common acceleration frameworks such as FlashAttention.

\textbf{Adaptive Superblock Grouping Schedule.}
To further investigate layerwise redundancy–aware grouping, we performed an additional experiment using a similarity-based adaptive scheme. We controlled the total number of ``grouped layers'' and assigned Superblocks by cosine similarity of layerwise Softmax activations. As shown in \cref{tab:grouping}, for LLaMA-3.1 8B on the medical domain, this similarity-based grouping variant achieved comparable performance to setting the Superblock size as a fixed number, indicating that the performance of UniAttn is robust to different grouping strategies, and that adaptive grouping does not yield meaningful gains over the simpler fixed-size design.


\section{Conclusion}

In this work, we explored LLM redundancies and efficient architectures in the context of post-training to reduce the inference costs of LLMs. 
While existing KV-cache sharing methods successfully reduce the memory overhead, they yield limited inference time reduction during deployment.
To address this limitation, we investigated the primary bottleneck in LLMs, the Softmax operation, and discovered significant redundancies across various pre-trained LLMs and datasets. Based on these findings, we proposed UniAttn, an efficient architecture that leverages these redundancies while preserving LLM capabilities through linear compensation in post-training.
We further support UniAttn with a theoretical analysis, demonstrating its advantage over baselines.
Extensive experiments on a series of models and datasets validate the effectiveness of UniAttn. 
Our UniAttn is particularly well-suited for real-world deployments, enjoying faster inference and less memory cost. 

\section*{Impact Statement}

This paper presents work whose goal is to advance the field of 
Machine Learning. There are many potential societal consequences 
of our work, none of which we feel must be specifically highlighted here.

\bibliography{example_paper}
\bibliographystyle{icml2026}

\newpage
\appendix
\onecolumn
\begin{table*}[h]
    \centering
    \vskip 0.15in
    \resizebox{0.95\textwidth}{!}{\begin{tabular}{c|c|cccccc}
\toprule
Method & Model & Learning Rate & Weight Decay & Batch Size & Epochs & Superblock Size & Superblock Groups \\
\midrule
\multirow{4}{*}{UniAttn} & LLaMA-2 7B & 2e-5 & 0 & 48 & 1 & 4 & [17-20], [21-24], [25-28], [29-32] \\
 & LLaMA-3.1 8B & 7e-6 & 0 & 48 & 1 & 4 & [17-20], [21-24], [25-28], [29-32] \\
 & Mistral 7B & 1e-6 & 0 & 48 & 1 & 4 & [17-20], [21-24], [25-28], [29-32] \\
 & Gemma-2 9B & 1e-6 & 0 & 48 & 1 & 4 & [22-25], [26-29], [30-33], [34-37], [38-41] \\
\bottomrule
\end{tabular}}
\caption{Training hyperparameter details of UniAttn.}
\label{tab:hyperparam_uniattn}
\vskip -0.1in
\end{table*}

\begin{table*}[h]
    \centering
    \vskip 0.15in
    \resizebox{0.95\textwidth}{!}{\begin{tabular}{c|c|cccccc}
\toprule
Method & Model & Learning Rate & Weight Decay & Batch Size & Epochs & CLA Block Size & CLA Block Groups \\
\midrule
\multirow{4}{*}{CLA-Half} & LLaMA-2 7B & 2e-5 & 0 & 48 & 1 & 4 & [25-28], [29-32] \\
 & LLaMA-3.1 8B & 7e-6 & 0 & 48 & 1 & 4 & [25-28], [29-32] \\
& Mistral 7B & 1e-6 & 0 & 48 & 1 & 4 & [25-28], [29-32] \\
& Gemma-2 9B & 1e-6 & 0 & 48 & 1 & 4 & [30-33], [34-37], [38-41] \\ \midrule
\multirow{4}{*}{CLA-Full} & LLaMA-2 7B & 2e-5 & 0 & 48 & 1 & 4 & [17-20], [21-24], [25-28], [29-32] \\
 & LLaMA-3.1 8B & 7e-6 & 0 & 48 & 1 & 4 & [17-20], [21-24], [25-28], [29-32] \\
& Mistral 7B & 1e-6 & 0 & 48 & 1 & 4 & [17-20], [21-24], [25-28], [29-32] \\
& Gemma-2 9B & 1e-6 & 0 & 48 & 1 & 4 & [22-25], [26-29], [30-33], [34-37], [38-41] \\
\bottomrule
\end{tabular}}
\caption{Training hyperparameter details of CLA.}
\label{tab:hyperparam_cla}
\vskip -0.1in
\end{table*}

\begin{table*}[h]
    \centering
    \vskip 0.15in
    \resizebox{0.95\textwidth}{!}{\begin{tabular}{c|c|cccccc}
\toprule
Method & Model & Learning Rate & Weight Decay & Batch Size & Epochs & \# of Dropped Layers  & Index of Dropped Layers \\
\midrule
\multirow{4}{*}{LLMDrop-Half} & LLaMA-2 7B & 2e-5 & 0 & 48 & 1 & 6 & 23,20,32,27,22,24 \\
 & LLaMA-3.1 8B & 7e-6 & 0 & 48 & 1 & 6 & 19,22,20,21,27,23 \\
& Mistral 7B & 1e-6 & 0 & 48 & 1 & 6 & 17,32,31,21,22,24 \\
& Gemma-2 9B & 1e-6 & 0 & 48 & 1 & 8 & 29,26,21,30,27,40,38,31 \\ \midrule
\multirow{4}{*}{LLMDrop-Full} & LLaMA-2 7B & 2e-5 & 0 & 48 & 1 & 12 & 23,20,32,27,22,24,31,29,25,30,28,26 \\
 & LLaMA-3.1 8B & 7e-6 & 0 & 48 & 1 & 12 & 19,22,20,21,27,23,30,29,28,26,24,25 \\
& Mistral 7B & 1e-6 & 0 & 48 & 1 & 12 & 17,32,31,21,22,24,29,23,28,25,27,26 \\
& Gemma-2 9B & 1e-6 & 0 & 48 & 1 & 15 & 29,26,21,30,27,40,38,31,32,33,35,39,36,37,34 \\
\bottomrule
\end{tabular}}
\caption{Training hyperparameter details of LLMDrop. The index of dropped layers are ordered by input-output similarity.}
\label{tab:hyperparam_llmdrop}
\vskip -0.1in
\end{table*}

\begin{table}[!t]
\centering
\scalebox{1.0}{\begin{tabular}
{lc|lc}
\toprule
 Package & Version & Package & Version \\
\midrule
PyTorch & 2.0.0 & transformers & 4.46.0 \\
deepspeed & 0.10.0 & tokenizers & 0.20.1 \\
datasets & 2.14.3 &  &  \\
\bottomrule
\end{tabular}}
\caption{Versions of used packages.}
\label{tab:package_version}
\end{table}

\section{License for Scientific Artifacts}

The Training dataset PMC \cite{PMC} is licensed under OpenRAIL license\footnote{https://www.licenses.ai/blog/2022/8/26/bigscience-open-rail-m-license}. The Training dataset TuLu3 \cite{TULU3} is licensed under Open Data Commons License\footnote{https://opendatacommons.org/licenses/}. The Mistral model \cite{Mistral}, Gemma-2 model \cite{Gemma2}, LLaMA-2 \cite{LawyerLLaMA} model, and LLaMA-3.1 model\cite{LLaMA3} are licensed under Apache License 2.0\footnote{https://choosealicense.com/licenses/apache-2.0/}. The LongAlign evaluation dataset is also licensed under Apache License 2.0\footnote{https://choosealicense.com/licenses/apache-2.0/}. The evaluation datasets \cite{RULER,CommonsenseQA,SIQA,MedQA,MedMCQA,PubMedQA} are subject to the MIT license\footnote{https://choosealicense.com/licenses/mit/}. All usages of scientific artifacts in this paper obey the corresponding licenses.

\section{More Implementation Details for Reproducibility}

\textbf{Post-Training Datasets.}
Regarding the medical domain, we directly employ the instruction tuning dataset for PMC\footnote{https://huggingface.co/datasets/axiong/pmc\_llama\\\_instructions} \cite{PMC} (i.e., training stage-2 for PMC-LLaMA) for post-training. The dataset consists of 513,999 instruction-input-output pairs.
Regarding the general instruction tuning scenario, we employ the Tulu 3 SFT Mixture\footnote{https://huggingface.co/datasets/allenai/tulu-3-sft-mixture} and filter for data with only 1 round of conversation, resulting in 896,070 input-output pairs.

\textbf{Training Hyperparameters.}
The training hyperparameters for all experiments are reported in table \cref{tab:hyperparam_uniattn}, \cref{tab:hyperparam_cla} and \cref{tab:hyperparam_llmdrop}.
Note that we adopt the learning rate and weight decay values according to existing research, in which those pre-trained models are post-trained under reported schedules.

\textbf{Evaluation Details.}
For all experiments, we evaluate the final checkpoint after post-training for benchmark evaluation. 
Regarding the medical domain, we report the 0-shot result on both PubMed, MegMCQA, and MedQA datasets. Regarding the general instruction tuning scenario, we report 0-shot result on both SIQA and CommonsenseQA for LLaMA-3.1 8B, Mistral 7B, and Gemma-2 9B models. For the earlier-released LLaMA-2 7B model, since it has a shorter pre-training context length, we report its 5-shot result as Tulu 3 mainly enhances model performance with longer context.

\textbf{Prompt for Post-Training.}
We employ the same prompt for post-training in both the medical domain and general instruction tuning scenario:
\begin{verbatim}
    "Below is an instruction that 
    describes a task, paired with an input 
    that provides further context. Write a 
    response that appropriately completes 
    the request.\n\n### 
    Instruction:\n{instruction}\n\n### 
    Input:\n{input}\n\n### 
    Response:{output}<EOS_TOKEN>"
\end{verbatim}
We simply keep the instruction field empty if no instruction is provided.

\textbf{Experiment Packages.}
We report the version numbers of used packages in Table \ref{tab:package_version}.

\section{More Experiment Results}
\label{sec:append_exp}

\textbf{Main Results on Mistral 7B and Gemma-2 9B.}
We adopt the settings in \cref{tab:main} and conduct the same experiments on Mistral 7B \cite{Mistral} and Gemma-2 9B \cite{Gemma2}. The results are shown in \cref{tab:main_appendix_g}.
Consistent with the results on the LLaMA models, our UniAttn achieves the best overall performance on both LLMs and post-training datasets while significantly reducing both time and memory costs. When operating the same number of layers, UniAttn achieves similar TTFT time to LLMDrop, well demonstrating the effectiveness of Softmax activation unification. Although CLA and LLMDrop achieve competitive results in some settings (e.g., CLA-Half in post-training Gemma on general instruction datasets), \textit{they cannot provide consistent performance across different settings}. 
Additionally, even on the Mistral and Gemma models, CLA achieves similar performance to its corresponding LLMDrop setting, again indicating that CLA diminishes the depth factor in pre-trained LLMs.

\textbf{Quantitative Analysis of Softmax Redundancies.}
In \cref{fig:softmax_sim} of the main article, we observed that Softmax activations in \textit{top half layers} share a high cross-layer similarity across different models and datasets. To provide a more comprehensive analysis, here we take LLaMA-3.1 8B as an example and present some statistics of the cosine similarity of Softmax activations. As shown in \cref{tab:quantitative_similarity}, cosine similarities appear to be consistent across different datasets, with average values ranging from 0.79 to 0.83. The maximum cosine similarity remains consistently high at 0.97, indicating that the overall range of cosine similarity remains stable across layers and datasets. The differences for cosine similarity between the top and bottom halves of layers indicate that Softmax operations are more redundant in deeper layers, as noted in the main article. The standard deviation values suggest that redundancy is fairly consistent across datasets. Overall, these results indicate that Softmax activations for pre-trained LLMs exhibit high redundancies.

\begin{table*}[t]
    \centering
    \resizebox{0.94\textwidth}{!}{\begin{tabular}{c|cc|ccc|c|cc|c}
\toprule
\multirow{2}{*}{Method} & \multirow{2}{*}{\makecell{TTFT \\ (s)}} & \multirow{2}{*}{\makecell{KV \\ Cache}} & \multicolumn{4}{c|}{Medical} &  \multicolumn{3}{c}{General}    \\ \cline{4-7} \cline{8-10} 
&  & & PubMedQa   & MedMCQA  & \multicolumn{1}{c}{MedQA} & AVG & SIQA  & \multicolumn{1}{c}{CommonsenseQA} & AVG  \\ \midrule
\multicolumn{10}{c}{\textbf{Mistral-7B (w/ GQA)}}  \\ 
\midrule
Pre-Trained & 1.94 & 100\%  & 75.8   & 48.3  & 50.9   & 58.3 & 46.7 & 56.5  & 51.6 \\
Post-Train & 1.94 & 100\%  & 78.8   & 49.7  & 60.3   & 62.9 & 53.7 & 76.6  & 65.2 \\ \hline
LLMDrop-Half & 1.54 & 81.3\%  & 78.0   & 50.6  & \underline{59.0}   & 62.5 & \underline{52.5} & 74.5  & 63.5 \\ 
LLMDrop-Full & 1.21 & 62.5\%  & \textbf{79.0}   & 50.2  & 56.7   & 62.0 & 51.8 & 73.6  & 62.7 \\ 
CLA-Half & 1.94 & 81.3\%  & 78.6   & \underline{56.0}  & 58.5   & \underline{64.4} & 52.0 & \textbf{77.9}  & \underline{65.0} \\ 
CLA-Full & 1.91 & 62.5\%  & \underline{78.8}   & 55.0  & 55.0   & 62.9 & 50.5 & 75.1  & 62.8 \\ 
UniAttn & 1.23 & 81.3\%  & 78.2   & \textbf{57.3}  & \textbf{60.5}   & \textbf{65.3} & \textbf{53.4} & \underline{77.5}  & \textbf{65.5} \\ 
\midrule
\multicolumn{10}{c}{\textbf{Gemma2-9B (w/ GQA)}}  \\ 
\midrule 
Pre-Trained & 2.23 & 100\%  & 78.6   & 57.6  & 60.6   & 65.6 & 51.5 & 68.4  & 60.0 \\
Post-Train & 2.23 & 100\%  & 78.6   & 58.6  & 61.9   & 66.4 & 54.4 & 75.6  & 65.0 \\ \hline
LLMDrop-Half   & 1.94 & 81.0\%  & 78.6   & 56.4  & \underline{60.5}   & \underline{65.2} & \textbf{55.6} & 68.4  & 62.0 \\ 
LLMDrop-Full & 1.66 & 64.3\%  & 78.2   & 54.8  & 58.7   & 63.9 & \underline{54.8} & 62.9  & 58.9 \\ 
CLA-Half & 2.19 & 78.6\%  & \textbf{79.2}   & \underline{56.6}  & 56.3   & 64.0 & 53.5 & \textbf{76.0}  & \textbf{64.8} \\ 
CLA-Full & 2.24 & 64.3\%  & 74.0   & 39.3  & 38.5   & 50.6 & 48.3 & 51.8  & 43.4 \\ 
UniAttn & 1.69 & 82.1\%  & \underline{79.0}   & \textbf{60.7}  & \textbf{64.3}   & \textbf{68.0} & 53.7 & \underline{74.4}  & \underline{64.1} \\ 
\bottomrule
\end{tabular}}
\caption{Post-training performance comparison. \textbf{Bold} and \underline{underline} denote the best and second-best performance of compressed models. For each method, we report their time to first token (TTFT, in seconds) and KV-cache retain rate (KV Cache).}
\label{tab:main_appendix_g}
\end{table*}

\begin{table*}[h]
    \centering
    \vskip 0.15in
    \resizebox{0.85\textwidth}{!}{\begin{tabular}{l|ccccc}
\toprule
Dataset & Average & Difference Between Top Half and Bottom Half Layers & Max & Min & Standard Deviation \\
\midrule
PMC & 0.8347 & 0.1314 & 0.9729 & 0.0159 & 0.1955 \\
Math & 0.8243 & 0.1149 & 0.9734 & 0.0384 & 0.1860 \\
Tulu3 & 0.8214 & 0.1212 & 0.9719 & 0.0307 & 0.1896 \\
F15K & 0.7927 & 0.1475 & 0.9710 & 0.0033 & 0.2109 \\
\bottomrule
\end{tabular}}
\caption{Statistics of the cosine similarity of Softmax activations for LLaMA-3.1 8B across different datasets.}
\label{tab:quantitative_similarity}
\vskip -0.1in
\end{table*}


\begin{table}[t]
\centering
\scalebox{0.82}{\begin{tabular}
{cc|c}
\toprule
Input to $W_c$ & Output adds to & AVG \\
\midrule
 MHA input & MHA output & \textbf{65.4} \\
 MHA input & FFN output & 64.1 \\  
 MHA input & Activation before $W_o$ & \underline{65.1} \\ 
 Activation projected by $W_v$ & Activation before $W_o$ & 64.3 \\ 
\bottomrule
\end{tabular}}
\caption{Performance comparison (\%) of different compensation designs.}
\label{tab:compensation_design}
\end{table}

\textbf{Empirical Statistics for the ``Depth Factor''.}
To further support \cref{assum:inftesimal} and the claim made in \cref{prop:effective_depth}, we applied cross-layer sharing to the top layers of LLaMA-3.1 8B and evaluated 100 randomly selected samples from the PMC post-training corpus. The average results of the following terms are: $||\delta_i||/||\mathbf{x}_i||=0.070$, $||J(A_i)||=0.041$, and $||J(A_i)\hat{\delta}||/||\text{Softmax}(A_i)||=0.038$. These results support that (1) $\delta_i$ is small comparing to $||\mathbf{x}_i||$, validating Assumption 3.5; (2) the depth factor is indeed diminished under cross-layer KV-cache sharing, consistent with Proposition 3.6.

\textbf{Different Compensation Designs.}
To compare different compensation designs, we attach the input and output of the $W_c$ transformation to different activations. As shown in \cref{tab:compensation_design}, directly compensating the MHA output with its input yields the best results. When adopting ``finer compensation granularities'', i.e., positioning the input and output closer together (Rows 3 and 4), post-training performance decreases. Similarly, adopting ``coarser compensation granularities'', i.e., moving the output compensation to after the FFN (Row 2), also results in lower performance. These findings suggest that using a linear transformation to compensate for errors specifically within the MHA module is the best-performing approach. 
\begin{figure}[!t]  
\vskip 0.2in
\begin{center}
\centerline{\includegraphics[width=0.6\columnwidth]{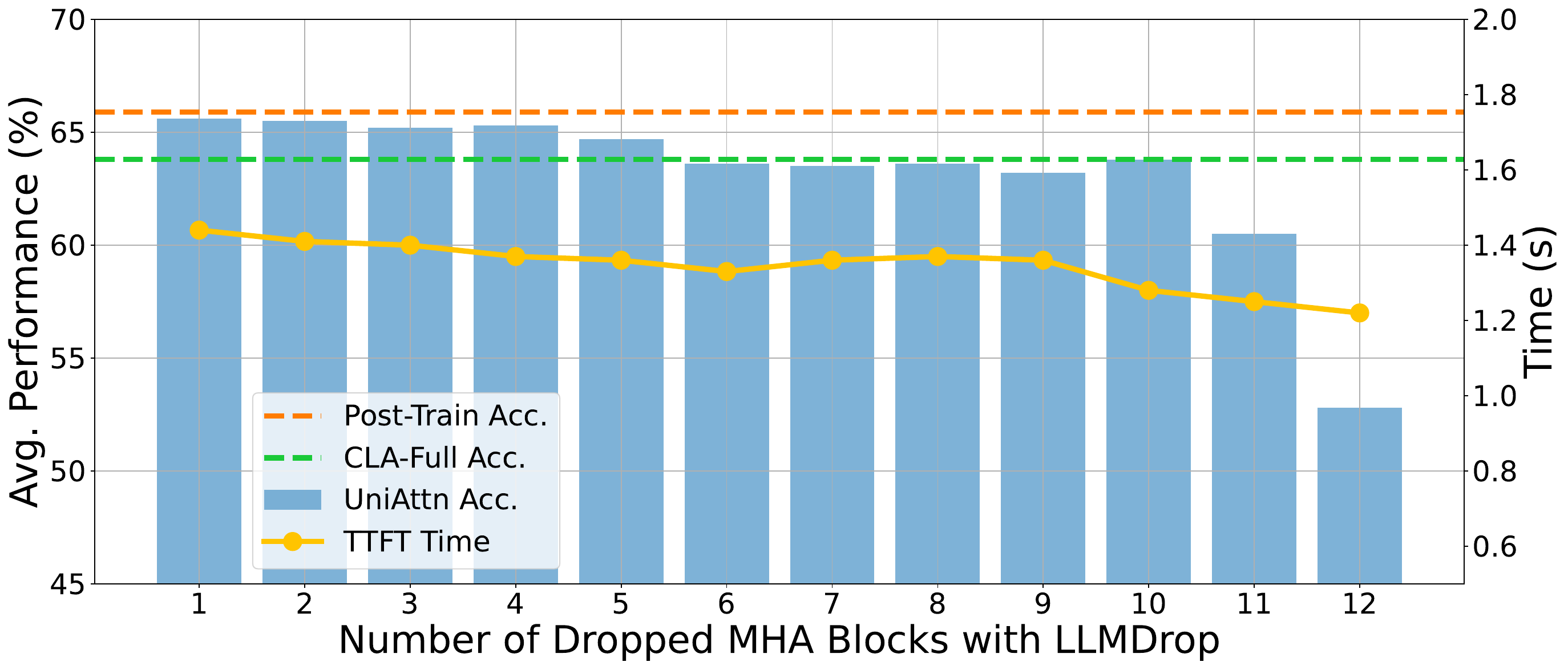}}
    \caption{Results on further dropping UniAttn layers (\%). We present the average accuracy and time to first token (TTFT) latency under different number of dropped MHA blocks via LLMDrop. Note that we only drop MHA blocks with unified Softmax activations. }
    \label{fig:llmdrop}
\end{center}
\vskip -0.2in
\end{figure}

\begin{algorithm}[!t]
   \caption{UniAttn Pipeline}
   \label{alg:example}
\begin{algorithmic}
   \STATE {\bfseries Input:} Pre-trained Model $M$, Post-train dataset $D$, SuperBlock size $b$, Apply SuperBlock merging from layer $i_s$ to $i_e$
   \STATE Create UniAttn model $M_{\text{uni}}$ by creating $\frac{i_e-i_s+1}{b}$ SuperBlocks in $M$
   \STATE Sample a subset $D_{\text{init}}$ with 1000 samples from $D$
   \STATE For model $M$, forward all samples from $D_{\text{init}}$, calculate the average output activations $\mathbf{x}_i'$ for each MHA block.
   \FOR{$i=1$ {\bfseries to} $\frac{i_e-i_s+1}{b}$}
   \FOR{$j=1$ {\bfseries to} $b-1$}
   \STATE For model $M_{\text{uni}}$, forward all samples from $D_{\text{init}}$, calculate the average output activations $\mathbf{x}_{\text{uni,}i_s+(i-1)b+j}'$ and the average input activations $\mathbf{x}_{\text{uni,}i_s+(i-1)b+j}$ for MHA block in layer $i_s+(i-1)b+j$.
   \STATE Calculate the initialization for the $W_{c,i_s+(i-1)b+j}$ matrix in layer $i_s+(i-1)b+j$ according to \cref{theorem:init}.
   \STATE Insert the initialized $W_{c,i_s+(i-1)b+j}$ matrix to layer $i_s+(i-1)b+j$ in model $M_{\text{uni}}$.
   \ENDFOR
   \ENDFOR
   \STATE Freeze all training parameters other than the $W_c$ matrices in $M_{\text{uni}}$. Train $W_c$ with dataset $D$ and apply early stop when loss stops decreasing (we consider the exponential moving average of training loss).
   \STATE Conduct full fine-tuning on $M_{\text{uni}}$ with $D$.
   \STATE {\bfseries Output:} Trained UniAttn model $M_{\text{uni}}$.
\end{algorithmic}
\end{algorithm}
\textbf{Experimental Analysis on the Impact of Model Depth.} 
In \cref{sec:discussion}, we demonstrated that, unlike cross-layer KV sharing methods, UniAttn does not diminish the depth factor in pre-trained LLMs. To provide experimental evidence for this, we further apply LLMDrop to the MHA modules that utilize unified Softmax activations from preceding layers (a total of 12 layers in UniAttn for the LLaMA-3.1 8B model) in post-trained UniAttn models.
Specifically, we compute the input-output similarities of these MHA modules and simply prune the top $k$ MHA modules with the highest similarities. The pruned model is then directly evaluated without additional fine-tuning. The average performance w.r.t. $k$ in shown in \cref{fig:llmdrop}.
As shown in the figure, pruning a few MHA modules from our UniAttn model still results in higher performance than the CLA-Full model variant.
After pruning 10 MHA modules, UniAttn+LLMDrop achieves an average performance of 63.8, matching the performance of CLA-Full reported in \cref{tab:main}. This demonstrates that UniAttn better preserves the impact of model depth compared to CLA. Furthermore, we can also conclude that the depth of the model significantly contributes to the post-train performance, thereby validating our theoretical analysis.


\section{Pipeline for Applying UniAttn during Post-Training}

We provide a detailed pipeline for applying UniAttn as \cref{alg:example}.

\newpage

\section{Theoretical Insights for Proposing \cref{assum:inftesimal}}
\label{append:theo_insights}

We first give a series of conclusions (namely \cref{lem:linear}, \cref{prop:linear}, \cref{prop:similarity}, and \cref{prop:prenorm}) as preliminaries.

\begin{lemma}
\label{lem:linear}
(\textbf{A linear Pre-Norm system has bounded growth})
$f_i:\mathbb{R}^n\rightarrow\mathbb{R}^n$, $i\in \mathbb{N}$ are LINEAR transformations and $\text{Norm}(\cdot)$ yields a vector with unit Frobenius-norm (denoted as $||\cdot||$). Let $\mathbf{x}_0\in \mathbb{R}^n$ be the input to the system. Consider an $L$-layer Pre-Norm architecture defined by:
\begin{equation}
\begin{aligned}
    &\mathbf{x}_{k}=\mathbf{x}_{k-1}+f_k(\text{Norm}(\mathbf{x}_{k-1})),\\&\quad \text{for}\quad k=1,2,\dots,L.
\end{aligned}
\end{equation} 
If the largest singular value in all transformation matrices is bounded by $\lambda$, then:
\begin{equation}
    ||\mathbf{x}_L||\le||\mathbf{x}_0||+\lambda L
\end{equation} 
\end{lemma}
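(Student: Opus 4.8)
The plan is to prove the bound by induction on the layer index $k$, using the triangle inequality together with the operator-norm characterization of the largest singular value. First I would establish the base case trivially: for $k=0$ we have $||\mathbf{x}_0|| \le ||\mathbf{x}_0|| + \lambda \cdot 0$. For the inductive step, I would apply the triangle inequality to the recurrence to obtain
\begin{equation}
    ||\mathbf{x}_k|| = ||\mathbf{x}_{k-1} + f_k(\text{Norm}(\mathbf{x}_{k-1}))|| \le ||\mathbf{x}_{k-1}|| + ||f_k(\text{Norm}(\mathbf{x}_{k-1}))||.
\end{equation}

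The key observation is then to control the second term. Since $f_k$ is linear, it is represented by a matrix whose action on any vector $v$ satisfies $||f_k(v)|| \le \sigma_{\max}(f_k)\,||v||$, where $\sigma_{\max}$ denotes the largest singular value; this is precisely the operator-norm bound induced by the Frobenius/Euclidean norm. By hypothesis $\sigma_{\max}(f_k) \le \lambda$ for every $k$. Crucially, the argument fed into $f_k$ is $\text{Norm}(\mathbf{x}_{k-1})$, which by assumption is a unit vector, so $||\text{Norm}(\mathbf{x}_{k-1})|| = 1$. Combining these gives $||f_k(\text{Norm}(\mathbf{x}_{k-1}))|| \le \lambda \cdot 1 = \lambda$, independent of the magnitude of $\mathbf{x}_{k-1}$.

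Substituting this into the inequality above yields $||\mathbf{x}_k|| \le ||\mathbf{x}_{k-1}|| + \lambda$. Invoking the inductive hypothesis $||\mathbf{x}_{k-1}|| \le ||\mathbf{x}_0|| + \lambda(k-1)$ then gives $||\mathbf{x}_k|| \le ||\mathbf{x}_0|| + \lambda k$, completing the induction. Setting $k = L$ delivers the stated result $||\mathbf{x}_L|| \le ||\mathbf{x}_0|| + \lambda L$.

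I do not anticipate a genuine obstacle here, since the result is an additive telescoping bound that follows directly from the normalization hypothesis decoupling the per-layer contribution from the accumulated norm. The only subtlety worth stating carefully is the identification of the largest-singular-value bound with the operator norm under the chosen $||\cdot||$; I would make explicit that $\text{Norm}(\cdot)$ producing a unit-norm output is exactly what prevents the recurrence from growing multiplicatively (as it would in a Post-Norm or unnormalized system), thereby ensuring the contribution of each layer is capped at $\lambda$ rather than scaling with $||\mathbf{x}_{k-1}||$.
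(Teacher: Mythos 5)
Your proof is correct and is essentially the same argument as the paper's: the paper unrolls the recurrence into $\mathbf{x}_L=\mathbf{x}_0+\sum_{k=1}^{L}f_k(\text{Norm}(\mathbf{x}_{k-1}))$ and applies the triangle inequality with $||\text{Norm}(\mathbf{x}_{k-1})||=1$ and the singular-value bound, which is just the telescoped form of your induction. No gap.
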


\begin{proof} 
Unroll the expression of $\mathbf{x}_L$ and inductively:
\begin{equation}
    \mathbf{x}_L=\mathbf{x}_0+\sum_{k=1}^{L}F_k(\text{Norm}(\mathbf{x}_{k-1}))
\end{equation}
As $||\text{Norm}(\mathbf{x}_k)||=1$ for all $k$, then:
\begin{equation}
    \begin{aligned}
        ||\mathbf{x}_L|| &= ||\mathbf{x}_0+\sum_{k=1}^{L}F_k(\text{Norm}(\mathbf{x}_{k-1}))|| \\
        &\le ||\mathbf{x}_0||+\sum_{k=1}^{L}||F_k(\text{Norm}(\mathbf{x}_{k-1}))|| \\
        &\le ||\mathbf{x}_0||+\sum_{k=1}^{L}||\lambda\cdot\text{Norm}(\mathbf{x}_{k-1}))|| \\
        &= ||\mathbf{x}_0||+\lambda L
    \end{aligned}
\end{equation}
\end{proof}

\cref{lem:linear} shows that a Pre-Norm linear system has bounded growth. The norm of output by each layer grows \textit{at most linearly} with depth $L$. 
\begin{proposition}
\label{prop:linear}
    (\textbf{Pre-trained LLMs are linear Pre-Norm systems})
    Pre-trained decoder-based LLMs exhibit a high linearity score. Formally, let $A, B\in \mathbb{R}^{n\times d}$ denote the normalized input and output of a decoder block in LLM, respectively,
\begin{equation}
    \min_X ||AX-B||\approx1
\end{equation}
\end{proposition}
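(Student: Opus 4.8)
The plan is to read \cref{prop:linear} as a quantitative statement about the residual geometry of a single Pre-Norm block and to reduce it to the (empirically verifiable) fact that each block's update is small relative to the residual stream. First I would pin down the linearity measure implicit in the statement: with $A=\text{Norm}(\mathbf{x}_{k-1})$ the normalized block input and $B=\text{Norm}(\mathbf{x}_{k})$ the normalized output, $\min_X\|AX-B\|$ is the Frobenius residual of the best linear fit, and the linearity score of \cite{razzhigaev-etal-2024-transformer} is exactly $1$ minus this residual after normalizing by $\|B\|$. Establishing that the fit residual is negligible (so that the score sits near its maximal value) is therefore what ``high linearity'' demands, and that is the quantity I would bound.

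The core step is an \emph{identity-map upper bound}. Since $X=I$ is itself a linear map, the optimum over $X$ can only be smaller than the residual incurred by the identity, so it suffices to control $\|A-B\|$. Writing the block as
\begin{equation}
\mathbf{x}_k=\mathbf{x}_{k-1}+f_k(\text{Norm}(\mathbf{x}_{k-1})),\qquad \|f_k(\text{Norm}(\mathbf{x}_{k-1}))\|\ll\|\mathbf{x}_{k-1}\|,
\end{equation}
the update $\mathbf{x}_k-\mathbf{x}_{k-1}$ is a small perturbation of the residual stream, so $\mathbf{x}_k\approx\mathbf{x}_{k-1}$ and hence $B\approx A$ after renormalization. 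Quantitatively, using that $v\mapsto v/\|v\|$ is locally Lipschitz with constant of order $1/\|\mathbf{x}_{k-1}\|$, I would obtain
\begin{equation}
\min_X\|AX-B\|\le\|A-B\|=\|\text{Norm}(\mathbf{x}_{k-1})-\text{Norm}(\mathbf{x}_k)\|=O\!\left(\frac{\|f_k\|}{\|\mathbf{x}_{k-1}\|}\right),
\end{equation}
which is small precisely in the regime of interest. This also explains \emph{why the top (deep) layers are the most linear}: by \cref{lem:linear} the residual stream norm $\|\mathbf{x}_{k}\|$ accumulates over depth while each update contributes a bounded amount, so the relative update $\|f_k\|/\|\mathbf{x}_{k}\|$ shrinks with depth, tightening the bound exactly where the paper observes the strongest redundancy.

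The main obstacle is that the conclusion cannot be purely architectural: the bound is only as good as the hypothesis $\|f_k\|\ll\|\mathbf{x}_{k-1}\|$, which is a property of the \textbf{learned} weights rather than of the residual structure alone (a block could in principle apply a large nonlinear update). I would therefore close the argument in two pieces: present the conditional bound above as the clean reduction, and then discharge its hypothesis empirically, citing the measured linearity scores of \cite{razzhigaev-etal-2024-transformer} together with the relative-update statistics reported for these models, so that the near-vanishing fit residual (equivalently, the near-$1$ linearity score) is established for pre-trained LLMs in practice. This proposition then serves, alongside the companion results, as a building block toward justifying the infinitesimal-update form used in \cref{assum:inftesimal}.
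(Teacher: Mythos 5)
The paper does not actually prove this proposition: it is treated as an empirical premise, justified by a one-line citation to \cite{razzhigaev-etal-2024-transformer}, whose measured linearity scores on pre-trained LLMs are taken as the validation, after which the paper simply declares that each block may be approximated by a linear map with bounded singular values. Your proposal is therefore a genuinely different route. The identity-map bound $\min_X ||AX-B||\le ||A-B||$ together with the Lipschitz continuity of $v\mapsto v/||v||$ is correct, and it buys something the paper's citation does not: a mechanistic explanation of \emph{why} the score should be high and why it should tighten with depth (the residual stream norm accumulates per \cref{lem:linear} while each update stays bounded), which matches where the paper observes the strongest redundancy. Your reading of the displayed formula is also the right one: as written, $\min_X||AX-B||\approx1$ with normalized $A,B$ would mean \emph{zero} linearity; the prose makes clear the intended claim is that the linearity \emph{score} is near $1$, i.e., the fit residual is near $0$.

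Two caveats you should make explicit. First, there is a circularity hazard: your working hypothesis $||f_k(\text{Norm}(\mathbf{x}_{k-1}))||\ll||\mathbf{x}_{k-1}||$ is essentially \cref{assum:inftesimal}, and in the paper's appendix \cref{prop:linear} is one of the premises used (together with \cref{prop:similarity} and \cref{prop:prenorm}) to \emph{justify} \cref{assum:inftesimal}. If the hypothesis is discharged by appeal to that assumption rather than by independent measurements, the whole chain proves nothing; you do gesture at independent empirical statistics, but the argument only stands if that independence is real. Second, your bound establishes high linearity only in the degenerate sense that the block input--output map is close to the identity ($B\approx A$). What the paper extracts from this proposition is stronger in kind: that the block can be replaced by a linear transformation playing the role of $f_k$ in \cref{lem:linear}, i.e., that the \emph{update} is approximately linear with controlled singular values. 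Near-identity of the whole map is silent on the linearity of the update itself, so your argument supports the stated inequality but not the use the paper subsequently makes of it.
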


\cref{prop:linear} has been validated by \cite{razzhigaev-etal-2024-transformer} on pre-trained LLMs, 
suggesting that we can approximate each decoder block by a linear transformation with bounded singular values, 
thus placing pre-trained LLMs structurally “close” to a linear Pre-Norm system as in \cref{lem:linear}.
In the following discussion, we approximate pre-trained LLMs with an equivalent linear Pre-Norm system and investigate its input and output in each layer. 

\begin{proposition}
\label{prop:similarity}
    (\textbf{Pre-trained LLM layers primarily change the magnitude of activations})
    The input and output of a layer in pre-trained decoder-based LLMs share a high cosine similarity.
\end{proposition}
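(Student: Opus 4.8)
The plan is to exploit the Pre-Norm residual structure already set up in \cref{lem:linear} and \cref{prop:linear}. First I would write the single-layer update as $\mathbf{x}_{k}=\mathbf{x}_{k-1}+\delta_k$ with $\delta_k=f_k(\text{Norm}(\mathbf{x}_{k-1}))$, and bound its magnitude by the per-layer constant $\lambda$: since $\text{Norm}(\cdot)$ returns a unit-norm vector and, by \cref{prop:linear}, each block acts approximately as a linear map whose largest singular value is at most $\lambda$, we get $||\delta_k||\le\lambda\,||\text{Norm}(\mathbf{x}_{k-1})||=\lambda$, a bound that is \emph{independent of depth}.

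Next I would reduce the claim to a statement about the relative perturbation $r_k := ||\delta_k|| / ||\mathbf{x}_{k-1}||$. Expanding the cosine similarity, applying Cauchy--Schwarz to $\langle\mathbf{x}_{k-1},\delta_k\rangle$, and using the triangle inequality on $||\mathbf{x}_{k-1}+\delta_k||$ yields the clean lower bound
\begin{equation}
\cos(\mathbf{x}_{k-1},\mathbf{x}_{k})\ \ge\ \frac{1-r_k}{1+r_k},
\end{equation}
so that high cosine similarity is equivalent to $r_k\ll 1$, i.e.\ the layer's contribution being small relative to the residual stream onto which it is added.

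The remaining and hardest step is to show that $r_k$ is indeed small in the top layers, which requires a \emph{lower} bound on $||\mathbf{x}_{k-1}||$ --- note that \cref{lem:linear} supplies only an \emph{upper} bound. I would argue that the residual-stream norm grows with depth: expanding $||\mathbf{x}_k||^2=||\mathbf{x}_{k-1}||^2+2\langle\mathbf{x}_{k-1},\delta_k\rangle+||\delta_k||^2$ and invoking the near-orthogonality of each update to the accumulated stream gives $||\mathbf{x}_k||^2\approx||\mathbf{x}_0||^2+\sum_{j\le k}||\delta_j||^2$, so $||\mathbf{x}_{k-1}||$ increases monotonically while each $||\delta_k||\le\lambda$ stays bounded. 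Hence $r_k$ decreases with depth and is small precisely in the top layers, driving the displayed bound towards $1$; substituting back then establishes that the input and output of a top layer share high cosine similarity, which is exactly the assertion of the proposition.

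The main obstacle I anticipate is justifying the lower bound on $||\mathbf{x}_{k-1}||$: the bounded-growth lemma controls only the upper end, and in principle cancellations could shrink the stream norm. I would close this gap either by invoking the near-orthogonality of layer updates to the residual stream --- which makes the norms add in quadrature and rules out destructive cancellation --- or by treating the monotone growth of the residual-stream magnitude as an empirically verified property of pre-trained LLMs, fully consistent with the linearity established in \cref{prop:linear}.
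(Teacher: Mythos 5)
The first thing to note is that the paper does not prove \cref{prop:similarity} at all: it is presented as an empirical fact, with the sentence immediately following it stating that it ``has been validated by \cite{ShortGPT} on pre-trained LLMs.'' So you are attempting to derive analytically something the paper only cites. Your intermediate bound is correct --- writing $\mathbf{x}_k=\mathbf{x}_{k-1}+\delta_k$ and applying Cauchy--Schwarz and the triangle inequality does give $\cos(\mathbf{x}_{k-1},\mathbf{x}_k)\ge(1-r_k)/(1+r_k)$ with $r_k=\lVert\delta_k\rVert/\lVert\mathbf{x}_{k-1}\rVert$, provided $r_k<1$. But the step that carries all the weight, $r_k\ll1$, is exactly the content of \cref{assum:inftesimal}, and in the paper's logical architecture that assumption is \emph{derived from} \cref{prop:similarity} (together with \cref{prop:prenorm}), not the other way around. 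Your argument therefore inverts the paper's chain of dependencies, and the device you use to close the gap --- near-orthogonality of each update to the residual stream, so that norms add in quadrature and $\lVert\mathbf{x}_{k-1}\rVert$ grows monotonically --- is itself an unproven empirical hypothesis, nowhere established by \cref{lem:linear} or \cref{prop:linear} (which, as you note, give only an upper bound on the stream norm). You have not reduced the empirical burden; you have swapped one measured fact (high input--output cosine similarity) for another (orthogonal, non-cancelling updates plus monotone norm growth) that is no easier to justify.

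There is also a scope and interpretation problem. The proposition is titled ``layers primarily change the \emph{magnitude} of activations'': the intended reading is that a layer may alter $\lVert\mathbf{x}\rVert$ substantially while preserving direction, and it is only by combining this with \cref{prop:prenorm} that the paper concludes $\delta$ is small \emph{in the top layers}. Your route instead establishes high cosine similarity by forcing $\lVert\delta_k\rVert\ll\lVert\mathbf{x}_{k-1}\rVert$, which proves something stronger --- that the layer changes neither direction nor magnitude appreciably --- and only where the residual stream has already accumulated enough norm, i.e.\ in deep layers. The proposition as stated (and as validated empirically in \cite{ShortGPT}) concerns layers generally, including early ones where your $r_k$ need not be small. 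The honest resolution is the paper's own: treat \cref{prop:similarity} as an empirically verified premise rather than a theorem, and reserve the quantitative smallness of $\delta$ for the top layers where \cref{prop:prenorm} applies.
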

\cref{prop:similarity} has been validated by \cite{ShortGPT} on pre-trained LLMs, indicating that each layer in LLMs predominantly change the \textit{magnitude} of the activation.
We leverage the following conclusion for analyzing the change of activation magnitude.
\begin{proposition}
\label{prop:prenorm}
    (\textbf{Top layers of Pre-Norm transformers have smaller singular values})
    Training Pre-Norm transformers leads to top layers receiving smaller updates (gradient norms decrease from bottom to top).
\end{proposition}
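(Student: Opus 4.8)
\section*{Proof proposal for \cref{prop:prenorm}}

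The plan is to treat the pre-trained network as the linear Pre-Norm surrogate justified by \cref{prop:linear} and to track how the gradient signal reaching each layer scales with the depth index $k$. Writing each block as $\mathbf{x}_k=\mathbf{x}_{k-1}+W_k\,\text{Norm}(\mathbf{x}_{k-1})$, I would first record two forward facts. By \cref{prop:similarity} each increment $W_k\,\text{Norm}(\mathbf{x}_{k-1})$ is nearly collinear with the current residual stream, so the increments add constructively rather than cancel; combined with the controlled growth of \cref{lem:linear} this makes $\|\mathbf{x}_k\|$ increasing in $k$, so the top layers sit on a larger residual stream than the bottom layers. I would also record the elementary fact that the normalization Jacobian $J_N(\mathbf{x}):=\partial\,\text{Norm}(\mathbf{x})/\partial\mathbf{x}$ obeys $\|J_N(\mathbf{x})\|=\Theta(1/\|\mathbf{x}\|)$, since $\text{Norm}$ projects onto the unit sphere up to a fixed scale and its differential is inversely proportional to the input magnitude.

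Next I would analyze the backward pass. Setting $g_k:=\partial\mathcal{L}/\partial\mathbf{x}_k$, the residual connection gives the exact recursion
\begin{equation}
  g_{k-1}=g_k\Big(I+W_k\,J_N(\mathbf{x}_{k-1})\Big),
\end{equation}
while the gradient driving the $k$-th weight matrix is $\partial\mathcal{L}/\partial W_k=g_k\,\text{Norm}(\mathbf{x}_{k-1})^{\!\top}$, whose Frobenius norm is proportional to $\|g_k\|$ because $\|\text{Norm}(\mathbf{x}_{k-1})\|$ is fixed. Unrolling from the output layer downward gives $g_k=g_L\prod_{j=k+1}^{L}\big(I+W_j\,J_N(\mathbf{x}_{j-1})\big)$, in which every factor differs from the identity by a perturbation of size $\Theta(\lambda/\|\mathbf{x}_{j-1}\|)$. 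Because $\|\mathbf{x}_{j-1}\|$ grows with depth, these perturbations are largest for the bottom layers and negligible for the top layers, so the accumulated product—and hence $\|g_k\|$—is monotonically non-increasing in $k$. This is exactly the ``gradient norms decrease from bottom to top'' statement, and it specializes the Pre-LN gradient-scale analysis of \cite{DBLP:conf/icml/XiongYHZZXZLWL20} to our linear surrogate. Finally, since all blocks are initialized at a common scale and the top blocks integrate a strictly smaller gradient signal over training, their learned weights—and therefore the largest singular values of $W_k$—remain comparatively small.

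The main obstacle is making the unrolled product estimate rigorous: the factor $I+W_j\,J_N(\mathbf{x}_{j-1})$ is a matrix, so whether it amplifies or contracts $g_j$ depends on the alignment between $g_j$ and the range of $W_j\,J_N$, and a worst-case operator-norm bound could wash out the monotone trend. I would resolve this exactly as in \cite{DBLP:conf/icml/XiongYHZZXZLWL20}, by passing to expected squared gradient norms at random initialization: the cross terms vanish in expectation, leaving $\mathbb{E}\|g_{k-1}\|^2=\mathbb{E}\|g_k\|^2\big(1+\Theta(\lambda^2/\|\mathbf{x}_{k-1}\|^2)\big)$, which is monotone in depth and immune to the alignment issue. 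The remaining ingredients—the forward norm growth and the $\Theta(1/\|\mathbf{x}\|)$ Jacobian bound—are routine, and I would not belabor them.
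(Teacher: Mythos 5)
Your proposal is sound at the paper's level of rigor, but it takes a genuinely different route from the paper's own treatment. The paper does not derive the gradient-decay claim at all: it imports ``gradient norms decrease from bottom to top'' wholesale from \cite{DBLP:conf/icml/XiongYHZZXZLWL20}, and its proof is devoted only to the downstream consequence, namely that smaller accumulated gradient norms bound the growth of $\sigma_{\max}(W_i)$ via $\sigma_{\max}(W_i^{t+1})\le\sigma_{\max}(W_i^0)+\sum_t\|\eta\nabla_{W_i}\mathcal{L}^t\|$. You instead re-derive the gradient decay inside the paper's own linear Pre-Norm surrogate: the backward recursion $g_{k-1}=g_k(I+W_kJ_N(\mathbf{x}_{k-1}))$ with $\|J_N(\mathbf{x})\|=\Theta(1/\|\mathbf{x}\|)$, combined with a residual stream whose norm grows with depth, gives per-layer amplification factors that shrink toward the top, and your move to expected squared norms at initialization correctly sidesteps the alignment problem with the matrix product. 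This makes the argument self-contained and ties it explicitly to \cref{lem:linear} and \cref{prop:linear}, which the paper's citation-based shortcut does not; the paper's version, in exchange, is shorter and does not have to defend the surrogate model in the backward pass. Your closing step (equal initialization scale plus smaller integrated gradient signal implies smaller $\sigma_{\max}$) is the same triangle-inequality argument as the paper's, with the same caveat that it only controls an upper bound on $\sigma_{\max}$.

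Two steps deserve more care than you give them. First, your forward claim that $\|\mathbf{x}_k\|$ is increasing does not follow from \cref{lem:linear} (which is only an upper bound on growth) nor strictly from \cref{prop:similarity} (high cosine similarity between $\mathbf{x}_k$ and $\mathbf{x}_{k-1}$ is compatible with a shrinking norm, e.g.\ $\mathbf{x}_k=0.9\,\mathbf{x}_{k-1}$); the cleaner justification, consistent with your own initialization-time expectation argument, is that the increments are approximately orthogonal to the stream at initialization, so squared norms add and $\|\mathbf{x}_k\|^2$ grows roughly linearly in $k$. Second, the vanishing of cross terms is an initialization-time statement, whereas \cref{prop:prenorm} speaks of training; extending the profile over the whole trajectory requires the same leap the paper makes when it sums gradient norms over all steps, so this is a shared rather than a new gap.
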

\cite{DBLP:conf/icml/XiongYHZZXZLWL20} has given both a theoretical proof sketch and experimental evidence of \cref{prop:prenorm}. 
Based on \cref{prop:linear}, we approximate each layer in LLMs as a linear transformation.
To further show that the approximated linear transformations of the top layers in Pre-Norm LLMs tend to have smaller singular values, we give the following proof:

\begin{proof} 
With \cref{lem:linear} and \cref{prop:linear}, we have shown that pre-trained Pre-Norm LLMs can be approximated as linear Pre-Norm systems. Suppose a pre-trained LLM operates on activations $\mathbf{x}\in \mathbb{R}^{d}$, each layer $i$ in the pre-trained LLM can be treated as a linear transformation matrix $W_i\in \mathbb{R}^{d\times d}$. A generic gradient-based update can be expressed as:
\begin{equation}
    W_i^{t+1} = W_i^0 - \sum_{t=0}^{t}\eta\nabla_{W_i}\mathcal{L}^{t}
\end{equation}
Hence, the norm between initial and final weights satisfies:
\begin{equation}
    ||W_i^{t+1}-W_i^0|| \le \sum_{t=0}^{t}||\eta\nabla_{W_i}\mathcal{L}^{t}||
\end{equation}
We denote the singular value decomposition of $W_i^{t+1}-W_i^0$ as $W_i^{t+1}-W_i^0=U\Sigma V^T$, it is easy to derive that:
\begin{equation}
\begin{aligned}
    ||&W_i^{t+1}-W_i^0||\\ &= \sqrt{tr[(||W_i^{t+1}-W_i^0||)(||W_i^{t+1}-W_i^0||)^T]}\\&=\sqrt{tr(U\Sigma V^TV\Sigma^TU^T)} \\ &= \sqrt{tr(\Sigma\Sigma^T)}=\sqrt{\sum_{i=1}^{d}\sigma_i^2}
\end{aligned}
\end{equation}
where $\sigma_i$ denote the $i$-th singular value in $\Sigma$. Normally, the largest singular value $\sigma_{\max}$ dominates in the quadratic term, thus we can further write:
\begin{equation}
    ||W_i^{t+1}-W_i^0||=\sqrt{\sum_{i=1}^{d}\sigma_i^2}\approx \sigma_{\max}(W_i^{t+1}-W_i^0)
\end{equation}
Based on the basic features of singular values, we can show that:
\begin{equation}
\begin{aligned}
    \sigma_{\max} (W_i^{t+1})&\le\sigma_{\max}(W_i^0)+\sigma_{\max} (W_i^{t+1}-W_i^0)\\&\le\sum_{t=0}^{t}||\eta\nabla_{W_i}\mathcal{L}^{t}||+\sigma_{\max} (W_i^0)
\end{aligned}
\end{equation}
Due to having smaller values of $||\eta\nabla_{W_i}\mathcal{L}^{t}||$ (i.e., smaller gradient norm), compared to earlier layers, the top layers typically have smaller updates on the largest singular value, thus tend to have smaller $\sigma_{\max}$ values.
\end{proof}

Since pre-trained LLMs mainly change the magnitude of the activation (as shown in \cref{prop:similarity}), we can use the size of the largest singular values in each layer as an indicator of the corresponding activation magnitude change. In Pre-Norm architectures, each layer operates on an input with a constant norm. This leads to the assumption that the top-layer transformations generally generate outputs with smaller norms, which is \cref{assum:inftesimal} we propose in the article.


\section{Proof for \cref{theorem:init}}
\begin{proof}
Solving for the initialization of $W_c$ that incurs minimal compensation error is equivalent to solving the following optimization problem:
\begin{equation}
    \min_{W_c}\ ||\mathbb{E}(\mathbf{x}_{i+b}W_c-\epsilon)||.
\end{equation}
Since we use a small subset of the training dataset (with $S$ samples, we denote this as the \textit{initialization set}) to calculate the initialization, the objective above can be rewritten as:
\begin{equation}
    \min_{W_c}\ \frac{1}{S}\sum_{j=1}^S||\mathbf{x}_{i+b,j}W_c-\epsilon_j||||.
\end{equation}
This is a Least Absolute Deviations problem, and can be solved with the simplex algorithm, which is NP as shown in \cite{fearnley2015complexity}. To guarantee that the initialization method is efficient even in the worst case, we in turn consider the Least Squares variant:
\begin{equation}
    \min_{W_c}\ \frac{1}{S}\sum_{j=1}^S||\mathbf{x}_{i+b,j}W_c-\epsilon_j||^2.
\end{equation}
The Least Squares variant is a convex optimization problem, and has a closed-form solution as follows:
\begin{equation}
    W_{c}=X^+E,
\end{equation}
where $X$ is a matrix created by concatenating all $\mathbf{x}_{i+b,j}$ instances, $E$ is created by concatenating all $\epsilon_j$ instances, and $X^+$ represents the Moore-Penrose inverse of $X$.
When actually caculating for $W_c$, it is time-consuming to calculate $X^+$ as the size of $X$ is big. Therefore, we set the \textit{maximum instance size} of $X$ as $v$. Specifically, we average $\mathbf{x}$'s and $\epsilon$'s for every $\frac{S}{v}$ instances in the initialization set, and calculate $W_c$ on the averaged $\mathbf{x}$'s and $\epsilon$'s. For simplicity, we denote the averaged $\mathbf{x}$'s and $\epsilon$'s as $\mathbb{E}(\mathbf{x}_{i+b})$ and $\mathbb{E}(\epsilon)$. Consequently, the calculation for $W_c$ is formulated as:
\begin{equation}
    W_{c}=V\Sigma^{+}U^T\mathbb{E}(\epsilon),
\end{equation}
where $V\Sigma^{+}U^T$ is the Moore-Penrose inverse of $\mathbb{E}(\mathbf{x}_{i+b})$, calculated via SVD.
\end{proof}

\begin{table}
    \centering
    \resizebox{0.5\columnwidth}{!}{\begin{tabular}{c|ccccccc}
    \toprule
        $v$ & N/A & 1 & 2 & 3 & 4 & 8 & 16 \\
        \midrule
        PPL & 5.914 & 2.890 & 2.871 & 2.835 & 2.850 & 2.822 & 2.837 \\
    \bottomrule
    \end{tabular}}
    \caption{Perplexity results for initializations with different $v$ values on LLaMA-3.1 8B in the medical domain. ``N/A'' represents applying Softmax Unification without linear projections.}
    \label{tab:v_value_for_init}
\end{table}

To find an appropriate $v$ value for initialization, we have conducted preliminary experiments on LLaMA-3.1 8B in the medical domain. Specifically, we have benchmarked the perplexity on the \textit{initialization set} after inserting all linear projections with initialized weights calculated. As shown in \cref{tab:v_value_for_init}, most performance contribution comes from the act of adding initialization, and increasing the number of $v$ affects the PPL values slightly. These experiments guided us to adopt $v=1$ for our experiments due to its simplicity.

\section{Proof for \cref{prop:effective_depth}}
\label{sec:effective_depth_proof}

\begin{proof}
For simplicity, we suppose $\mathbf{x}_{i+1}$ is already normalized by the Pre-Norm. We can re-write \cref{eq:cla} as:
\begin{equation}
\begin{aligned}
    \mathbf{x}'_{i+1}&=\text{softmax}(\frac{\mathbf{x}_{i+1}W_{q,i+1}K^T_{i}}{\sqrt{d_k}})V_iW_{o,i+1}\\&\quad+\mathbf{x}_{i+1} \\
    &=\text{softmax}(\frac{(\mathbf{x}_{i}+\delta_i)W_{q,i+1}K^T_{i}}{\sqrt{d_k}})V_iW_{o,i+1}\\&\quad+\mathbf{x}_{i+1} \\
    &=\text{softmax}(\frac{\mathbf{x}_{i}W_{q,i+1}K^T_{i}}{\sqrt{d_k}}\\&\quad+\frac{\delta_i W_{q,i+1}K^T_{i}}{\sqrt{d_k}})V_iW_{o,i+1}+\mathbf{x}_{i+1}
\end{aligned}
\end{equation}
Let $A_i=\frac{\mathbf{x}_{i}W_{q,i+1}K^T_{i}}{\sqrt{d_k}}$, $\hat{\delta}_i=\frac{\delta_i W_{q,i+1}K^T_{i}}{\sqrt{d_k}}$. By expanding Softmax to the first order:
\begin{equation}
\begin{aligned}
    \mathbf{x}'_{i+1}
    &=\big[\text{softmax}(A_i)+J(A_i)\hat{\delta}_i\big]V_iW_{o,i+1}\\&+\mathbf{x}_{i+1}+o(\hat{\delta}_i^2),
\end{aligned}
\end{equation} 
where $J(A_i)$ is the Jacobian of Softmax at $A_i$. We ignore the $o(\hat{\delta}_i^2)$ remainder. 
\cite{StreamingLLM} has discovered that the pre-trained LLMs generally exhibit an ``attention sink'' feature, in which self-attentions heavily attend to the ``sink token'' at the beginning of a sequence. We adopt such feature to estimate $||J(A_i)||$.
According to the statistical pattern concluded by \cite{StreamingLLM}, attention logits $\mathbf{a}\in \mathbb{R}^{l}$ of a sequence with length $l$ is approximately:
\begin{equation}
    a_i=1\; (0<i<d),\; a_j=-1\; (i\ge d),\; d \ll l
\end{equation}
\cite{StreamingLLM} empirically adopts $d=4$. For a sequence of $l=1024$, $||J(\text{softmax}(\mathbf{a}))||\approx0.03\ll 1$, and decreases as $l$ increases. 
Therefore, we can conclude that the coefficient of the $\hat{\delta}$ term is significantly smaller than that of the Softmax term, making the contribution of the ``depth factor'' negligible.
\end{proof}

\section{Proof for \cref{prop:uniattn_depth}}
\label{sec:uniattn_depth_proof}

\begin{proof}
In the forward pass of UniAttn:
\begin{equation}
\begin{aligned}
    \mathbf{x}_{i+1}'&=s_i(\mathbf{x}_{i+1}W_{v,i+1})W_{o,i+1}+\mathbf{x}_{i+1} \\
    &=s_i(\mathbf{x}_{i}+\delta_i)W_{v,i+1}W_{o,i+1}+\mathbf{x}_{i+1}
\end{aligned}
\end{equation}
Observing that LLM layers primarily change activation magnitudes (see \cref{prop:similarity}), we assume $\mathbf{x}_i$ and $\delta$ sharing similar directions, further leading to $\frac{||s_i\mathbf{x}_{i}W_{v,i+1}W_{o,i+1}||}{||s_i\delta_i W_{v,i+1}W_{o,i+1}||}\approx\frac{||\mathbf{x}_{i}||}{||\delta_i||}$.
Self-attention in UniAttn maintains the relative magnitude of the depth factor, ensuring its influence on the output.
\end{proof}

\section{More Insights on Linear Compensation}
\label{sec:more_linear_comp}
To demonstrate the effectiveness of our linear compensation strategy, we propose the following theorem:
\begin{theorem}
\label{theorem:error_expect}
    $A,B\in\mathbb{R}^{m\times n}$, $X\in \mathbb{R}^{n\times n}$. Suppose that each element from $A$ and $B$ are drawn from a Gaussian distribution such that $a_{ij},b_{ij}\sim N(0,1)$. It satisfies that:
    \begin{equation}
        \mathbb{E}\bigl[\min_X ||AX-B||\bigr]=
            \begin{cases} 
                \sqrt{n(m-n)}, & \text{if } m \ge n,\\
                0,        & \text{if } m < n.
            \end{cases}
    \end{equation}
\end{theorem}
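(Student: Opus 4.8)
The plan is to reduce the expected minimum residual to a statement about the norm of a projection of a Gaussian matrix. First I would observe that for fixed $A$, the inner minimization $\min_X \|AX - B\|$ is a standard least-squares problem whose optimum is achieved by the orthogonal projection of $B$ onto the column space of $A$. Concretely, writing $P_A = A(A^TA)^+A^T$ for the orthogonal projector onto $\mathrm{col}(A)$, the minimizer is $X^\star = A^+B$ and the residual becomes $\|AX^\star - B\| = \|(I - P_A)B\|$, i.e. the Frobenius norm of the component of $B$ orthogonal to the column space of $A$. This step is routine but it is the conceptual pivot: it converts the optimization over $X$ into a projection acting on $B$.

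Next I would split on the two cases. When $m < n$, a generic $A$ has full row rank $m$, so its column space is all of $\mathbb{R}^m$, hence $P_A = I$, the residual is identically zero, and the expectation is $0$. When $m \ge n$, a generic $A$ (almost surely, since entries are i.i.d.\ Gaussian) has rank $n$, so $\mathrm{col}(A)$ is an $n$-dimensional subspace and $I - P_A$ is the orthogonal projector onto its $(m-n)$-dimensional complement. The key distributional fact I would invoke is that $A$ and $B$ are independent, and that the column-space of a Gaussian matrix is uniformly distributed (rotationally invariant). Conditioning on $A$, each of the $n$ columns $b_k$ of $B$ is an i.i.d.\ standard Gaussian vector in $\mathbb{R}^m$, and $\|(I-P_A)b_k\|^2$ is the squared norm of the projection of a standard Gaussian onto a fixed $(m-n)$-dimensional subspace, which is $\chi^2_{m-n}$ with expectation $m-n$. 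Summing over the $n$ columns gives $\mathbb{E}\|(I-P_A)B\|^2 = n(m-n)$, and this holds uniformly in $A$ so it survives taking the outer expectation.

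The main obstacle is the gap between the second moment and the first moment: the computation above delivers $\mathbb{E}\|(I-P_A)B\|^2 = n(m-n)$, whereas the theorem asserts $\mathbb{E}\|(I-P_A)B\| = \sqrt{n(m-n)}$, and in general $\mathbb{E}\|\cdot\| \le \sqrt{\mathbb{E}\|\cdot\|^2}$ by Jensen with \emph{strict} inequality unless the random variable is almost surely constant. So the clean square-root identity cannot follow from the second-moment calculation alone. I would therefore expect that either (a) the intended statement is really about the expected \emph{squared} residual, with the square root being a slight abuse, or (b) the result is meant in an asymptotic / concentration sense where $\|(I-P_A)B\|$ concentrates tightly around its mean. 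The honest way to close the argument is to prove the second-moment identity $\mathbb{E}\|(I-P_A)B\|^2 = n(m-n)$ exactly as above, and then note that $\|(I-P_A)B\|$ concentrates around $\sqrt{n(m-n)}$ as the dimensions grow (the residual is a $1$-Lipschitz function of the Gaussian entries of $B$ for fixed $A$, so Gaussian concentration pins it near its mean with fluctuations of order $1$, negligible compared to $\sqrt{n(m-n)}$). I would flag in the writeup that the square-root form is the concentration value and that the exact finite-size mean of the residual norm differs from $\sqrt{n(m-n)}$ by a lower-order correction.
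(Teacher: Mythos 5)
Your proposal follows essentially the same route as the paper: reduce $\min_X\|AX-B\|$ to $\|(I-AA^{+})B\|$ via the least-squares solution $X^{\star}=A^{+}B$, note that $I-AA^{+}$ is an orthogonal projector of rank $m-\min(m,n)$, and compute the second moment column-by-column using the $\chi^{2}$ distribution of a Gaussian projected onto a fixed subspace, giving $\mathbb{E}\|(I-AA^{+})B\|^{2}=n(m-n)$. The Jensen gap you flag is real and is in fact present in the paper's own argument: the paper's auxiliary lemma computes $\mathbb{E}\bigl[\|Q\mathbf{b}\|^{2}\bigr]=k$ and then silently concludes $\mathbb{E}\bigl[\|Q\mathbf{b}\|\bigr]=\sqrt{k}$, and the main proof repeats the same unjustified passage from the second moment to the first. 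Since $\|(I-AA^{+})B\|$ is $\chi_{n(m-n)}$-distributed conditional on $A$, the exact first moment is $\sqrt{2}\,\Gamma\bigl((n(m-n)+1)/2\bigr)/\Gamma\bigl(n(m-n)/2\bigr)$, which is strictly below $\sqrt{n(m-n)}$; the theorem as stated is only correct for the expected \emph{squared} residual, or approximately (via the concentration argument you sketch) when $n(m-n)$ is large. Your writeup is therefore more careful than the paper's, and your proposed fix --- state the second-moment identity exactly and treat $\sqrt{n(m-n)}$ as the concentration value --- is the right way to repair the statement.
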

See the following subsections for the proof. In the article, we apply the F15K dataset \cite{rerope2023} to calculate the error $\epsilon$ on LLaMA-2 7B \cite{touvron2023llama} and LLaMA-3.1 8B \cite{LLaMA3} models after inserting the initialized linear transformation $W_c$ according to \cref{theorem:init}. While the sequence length being $m=5120$ and the hidden size of applied models being $n=4096$, the error is 11.09 and 5.76 after linear compensation, correspondingly, which are both \textit{magnitudes lower} than their expectations on random data. Those results demonstrate the effectiveness of our linear compensation strategy.

We give the proof for \cref{theorem:error_expect} in the following subsections.
\subsection{Preliminaries}
To prove \cref{theorem:error_expect}, we adopt the notations in \cref{theorem:error_expect} and prove a series of lemmas first.
\begin{lemma}
\label{lemma:P}
Let $P=AA^+$, $P$ is an orthogonal projection and $\text{Im}(P)=\text{Col}(A)$.
\end{lemma}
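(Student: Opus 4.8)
The plan is to prove the two claims---that $P = AA^+$ is an orthogonal projection and that $\text{Im}(P) = \text{Col}(A)$---directly from the four defining (Penrose) properties of the Moore-Penrose pseudoinverse, namely $AA^+A = A$, $A^+AA^+ = A^+$, $(AA^+)^T = AA^+$, and $(A^+A)^T = A^+A$. These are exactly the conditions satisfied by the $A^+$ constructed via SVD in the statement of \cref{theorem:error_expect}, so they are available at no cost, and the entire argument reduces to three short algebraic manipulations.

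First I would establish that $P$ is an orthogonal projection, which by definition requires both symmetry ($P^T = P$) and idempotency ($P^2 = P$). Symmetry is immediate from the third Penrose property: $P^T = (AA^+)^T = AA^+ = P$. Idempotency follows by inserting the first Penrose property into the product: $P^2 = (AA^+)(AA^+) = (AA^+A)A^+ = AA^+ = P$. Since a matrix that is simultaneously symmetric and idempotent is precisely an orthogonal projection, this half of the lemma is complete.

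Next I would show $\text{Im}(P) = \text{Col}(A)$ by establishing the two inclusions. The inclusion $\text{Im}(P) \subseteq \text{Col}(A)$ is trivial, because $Py = A(A^+ y) \in \text{Col}(A)$ for every $y$. For the reverse inclusion, I would take an arbitrary $v = Ax \in \text{Col}(A)$ and apply $P$, again invoking $AA^+A = A$: this gives $Pv = AA^+Ax = Ax = v$, so $v$ is fixed by $P$ and therefore lies in $\text{Im}(P)$. Combining the inclusions yields the claimed equality.

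The argument contains no genuine obstacle; the only point meriting care is that everything rests on having the Penrose conditions in hand. Should one prefer to argue from the SVD definition used elsewhere in the paper rather than cite these properties, the alternative is to write $A = U\Sigma V^T$ and $A^+ = V\Sigma^+ U^T$, compute $P = U(\Sigma\Sigma^+)U^T$ where $\Sigma\Sigma^+$ is the $m\times m$ diagonal matrix carrying $r = \text{rank}(A)$ ones followed by zeros, and then read off symmetry, idempotency, and the fact that $\text{Im}(P)$ is spanned by the first $r$ columns of $U$, which form an orthonormal basis of $\text{Col}(A)$. Either route is routine, so I expect the proof to be essentially a verification rather than a construction.
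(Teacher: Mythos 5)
Your proof is correct and follows essentially the same route as the paper's: symmetry from $(AA^+)^T = AA^+$, idempotency from $AA^+A = A$, the inclusion $\text{Im}(P)\subseteq\text{Col}(A)$ from $Py = A(A^+y)$, and the reverse inclusion by showing $P$ fixes every $Ax$. The alternative SVD computation you sketch is a fine bonus but not needed; the main argument matches the paper step for step.
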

\begin{proof}
The Moore-Penrose inverse matrix exhibits some basic features, namely $AA^+A=A$ and $(AA^+)^T=AA^+$. Using that feature we can easily verify that:
\begin{equation}
    P^2=AA^+AA^+=(AA^+A)A^+=AA^+=P
\end{equation}
\begin{equation}
    P^T=(AA^+)^T=AA^+=P
\end{equation}
So $P$ is an orthogonal projection.

Next, we prove $\text{Im}(P)=\text{Col}(A)$. For any vector $\mathbf{v}\in \text{Col}(A)$, there exists a $\mathbf{c}\in \mathbb{R}^n$ that $\mathbf{v}=A\mathbf{c}$. We have:
\begin{equation}
    P\mathbf{v}=AA^+A\mathbf{c}=(AA^+A)\mathbf{c}=A\mathbf{c}=\mathbf{v}
\end{equation}
So $\mathbf{v}\in\text{Im}(P)$. Therefore, $\text{Col}(A)\subseteq\text{Im}(P)$ Since $P=AA^+$, it is obvious that $\text{Im}(P)\subseteq\text{Col}(A)$. Hence, we can conclude that $\text{Im}(P)=\text{Col}(A)$.
\end{proof}

\begin{lemma}
\label{lemma:i_p}
$I-P$ is an orthogonal projection and $\text{Im}(I-P)=\text{Col}(A)^{\perp}$.
\end{lemma}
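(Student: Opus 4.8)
The plan is to prove that $I-P$ is an orthogonal projection onto $\text{Col}(A)^{\perp}$, reusing the machinery already established in \cref{lemma:P}. The argument splits naturally into two independent verifications: first that $I-P$ is an orthogonal projection, and second that its image is exactly the orthogonal complement of the column space of $A$.

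First I would verify the projection property. A matrix $Q$ is an orthogonal projection precisely when it is both idempotent ($Q^2=Q$) and symmetric ($Q^T=Q$). Since \cref{lemma:P} already gives us $P^2=P$ and $P^T=P$, both facts follow by a one-line computation: $(I-P)^2 = I - 2P + P^2 = I - 2P + P = I-P$, and $(I-P)^T = I^T - P^T = I - P$. Neither step requires anything beyond the properties of $P$ we are entitled to assume.

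Next I would identify the image. The cleanest route is to use the standard fact that for an orthogonal projection $P$ one has the orthogonal direct sum decomposition $\mathbb{R}^m = \text{Im}(P) \oplus \text{Im}(P)^{\perp}$, together with the observation that $\text{Im}(I-P) = \ker(P)$, which for a symmetric $P$ equals $\text{Im}(P)^{\perp}$. Concretely, $\mathbf{v}\in\text{Im}(I-P)$ iff $\mathbf{v}=(I-P)\mathbf{w}$ for some $\mathbf{w}$, and applying $P$ gives $P\mathbf{v}=P\mathbf{w}-P^2\mathbf{w}=0$, so $\text{Im}(I-P)\subseteq\ker(P)$; conversely if $P\mathbf{v}=0$ then $\mathbf{v}=(I-P)\mathbf{v}\in\text{Im}(I-P)$. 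Then I invoke \cref{lemma:P}, which tells us $\text{Im}(P)=\text{Col}(A)$, and combine it with the elementary identity $\ker(P)=\text{Im}(P^T)^{\perp}=\text{Im}(P)^{\perp}$ (valid because $P$ is symmetric) to conclude $\text{Im}(I-P)=\text{Col}(A)^{\perp}$.

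Honestly, I do not expect any serious obstacle here; the lemma is a routine corollary of \cref{lemma:P}, and the only point demanding mild care is the link between $\text{Im}(I-P)$ and $\text{Col}(A)^{\perp}$. The cleanest way to make that airtight is to argue the two inclusions directly: for $\mathbf{v}=(I-P)\mathbf{w}$ and any $\mathbf{u}=A\mathbf{c}\in\text{Col}(A)$, use $\langle \mathbf{u},\mathbf{v}\rangle = \mathbf{c}^T A^T(I-P)\mathbf{w}$ and the relation $A^T(I-P)=A^T - (AA^+A)^T A^{+T}\cdots$, which via $P^TA = PA = AA^+A = A$ collapses to $A^T(I-P)=A^T-A^T=0$, giving $\text{Im}(I-P)\subseteq\text{Col}(A)^{\perp}$; the reverse inclusion follows from a dimension count using $\dim\text{Im}(I-P)=m-\text{rank}(P)=m-\text{rank}(A)=\dim\text{Col}(A)^{\perp}$. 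I would present whichever of these two phrasings is shorter given the surrounding notation.
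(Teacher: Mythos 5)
Your proposal is correct and follows essentially the same route as the paper: verify idempotence and symmetry of $I-P$ from the corresponding properties of $P$, then identify $\text{Im}(I-P)$ with $\text{Col}(A)^{\perp}$ using $\text{Im}(P)=\text{Col}(A)$ from \cref{lemma:P}. Your phrasing via $\text{Im}(I-P)=\ker(P)=\text{Im}(P)^{\perp}$ is a mild repackaging of the paper's two-inclusion argument (which shows $\langle (I-P)\mathbf{c},A\mathbf{w}\rangle=0$ directly and uses $P\mathbf{u}=0$ for $\mathbf{u}\in\text{Col}(A)^{\perp}$), and both the kernel identity and your fallback dimension count are sound.
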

\begin{proof}
First, we prove that $I-P$ is an orthogonal projection:
\begin{equation}
(I-P)^2=I-2P+P^2=I-2P+P=I-P
\end{equation}
\begin{equation}
(I-P)^T=I-P^T=I-P
\end{equation}
Then, we prove that $\text{Im}(I-P)=\text{Col}(A)^{\perp}$. For any $\mathbf{v}\in \text{Im}(I-P)$, there exists a $\mathbf{c} \in \mathbb{R}^m$ that $\mathbf{v}=(I-P)\mathbf{c}$. Let $\mathbf{d}\in \text{Col}(A)$, then there exists a $\mathbf{w} \in \mathbb{R}^n$ that $\mathbf{d}=A\mathbf{w}$. Computing the inner product of $\mathbf{v}$ and $\mathbf{d}$ yields:
\begin{equation}
\begin{aligned}
    \langle\mathbf{v},\mathbf{d}\rangle&=\langle(I-P)\mathbf{c},A\mathbf{w}\rangle=\langle\mathbf{c},(I-P)A\mathbf{w}\rangle\\&=\langle\mathbf{c},(A-AA^+A)\mathbf{w}\rangle=\langle\mathbf{c},0\rangle=0
\end{aligned}
\end{equation}
This shows that $\mathbf{v}\in\text{Col}(A)^{\perp}$, and in turn $\text{Im}(I-P)\subseteq\text{Col}(A)^{\perp}$.

Conversely, suppose $\mathbf{u}\in\text{Col}(A)^{\perp}$. Recall \cref{lemma:P} that $P$ is an orthogonal projection and $\text{Im}(P)=\text{Col}(A)$, we have $P\mathbf{u}=0$. Therefore,
\begin{equation}
    (I-P)\mathbf{u}=\mathbf{u}-P\mathbf{u}=\mathbf{u}
\end{equation}
This shows that $\mathbf{u}\in\text{Im}(I-P)$, and in turn $\text{Col}(A)^{\perp}\subseteq\text{Im}(I-P)$. Therefore, $\text{Im}(I-P)=\text{Col}(A)^{\perp}$.
\end{proof}

From \cref{lemma:i_p}, we can immediately conclude that:
\begin{corollary}
\label{corollary:rank}
$\text{rank}(I-P)=m-\min (m-n)$
\end{corollary}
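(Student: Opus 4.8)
The plan is to read the rank off directly from the image characterization already established, so that essentially no new computation is needed beyond one standard probabilistic fact. First I would recall that the rank of any matrix equals the dimension of its image, hence $\text{rank}(I-P)=\dim(\text{Im}(I-P))$. \Cref{lemma:i_p} identifies $\text{Im}(I-P)=\text{Col}(A)^{\perp}$, which immediately reduces the corollary to evaluating $\dim(\text{Col}(A)^{\perp})$.

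Next I would invoke the orthogonal decomposition $\mathbb{R}^m=\text{Col}(A)\oplus\text{Col}(A)^{\perp}$, which gives $\dim(\text{Col}(A)^{\perp})=m-\dim(\text{Col}(A))=m-\text{rank}(A)$. At this point the deterministic part of the argument is finished, and everything hinges on the value of $\text{rank}(A)$.

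The single probabilistic ingredient is that a Gaussian matrix has full rank almost surely. Since the entries $a_{ij}$ are i.i.d.\ $N(0,1)$, the law of $A$ is absolutely continuous with respect to Lebesgue measure on $\mathbb{R}^{m\times n}$. The rank-deficient matrices are exactly those on which every $\min(m,n)\times\min(m,n)$ minor vanishes; this is the common zero set of finitely many nonzero polynomials, hence a proper algebraic subvariety of Lebesgue measure zero. Consequently $\text{rank}(A)=\min(m,n)$ with probability one, and substituting this into the previous identity yields $\text{rank}(I-P)=m-\min(m,n)$.

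I expect the argument to be essentially immediate, which is consistent with it being stated as a corollary: the only step requiring any care is the almost-sure full-rank claim, namely verifying that the event $\{\text{rank}(A)<\min(m,n)\}$ is Lebesgue-null and therefore Gaussian-null. Everything else is a direct application of \cref{lemma:i_p} together with the dimension formula for an orthogonal complement.
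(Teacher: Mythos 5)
Your argument is correct and follows essentially the same route as the paper: identify $\operatorname{rank}(I-P)$ with $\dim\operatorname{Col}(A)^{\perp}=m-\operatorname{rank}(A)$ via \cref{lemma:i_p}, then use that a Gaussian matrix has rank $\min(m,n)$ almost surely; your explicit justification of the almost-sure full-rank step (via vanishing minors forming a Lebesgue-null variety) is a detail the paper merely asserts. Note also that the statement as printed writes $m-\min(m-n)$, which is evidently a typo for $m-\min(m,n)$, the quantity you (correctly) derive.
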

\begin{proof}
    $\text{rank}(I-P)=\dim \text{Im}(I-P)=\dim \text{Col}(A)^{\perp}=m-\text{rank}(A)$. Recall \cref{theorem:error_expect} that every element from $A$ is sampled from a Gaussian distribution, so that by probability of 1 we have $\text{rank}(A)=\min(m,n)$, which leads to the conclusion of this corollary.
\end{proof}

Lastly, we give a lemma on orthogonal projections to a Gaussian-sampled vector:
\begin{lemma}
\label{lemma:gaussian}
    If $\mathbf{b}\sim N(\mathbf{0},I_{m\times m})$ is a standard Gaussian in $\mathbb{R}^m$ and $Q$ is an orthogonal projector of rank $k$, then
\begin{equation}
    \mathbb{E}\bigl[||Q\mathbf{b}||\bigr]=\sqrt{k}
\end{equation}
\end{lemma}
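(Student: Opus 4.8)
The plan is to show that the expected magnitude of $Q\mathbf{b}$ is governed entirely by the rank of $Q$, by reducing the quadratic form $\|Q\mathbf{b}\|^2$ to a sum of $k$ independent squared standard normals and reading off the corresponding magnitude $\sqrt{k}$. The guiding principle is rotational invariance: since $\mathbf{b}$ is isotropic, only the $k$-dimensional range of $Q$ matters, and the particular projector drops out, leaving a quantity that depends on $Q$ solely through $k$.

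First I would use the two defining properties of an orthogonal projector, $Q^2=Q$ and $Q^T=Q$ (the same idempotent-symmetric structure established for $P$ in \cref{lemma:P}), to diagonalize $Q$. A symmetric idempotent matrix has spectrum contained in $\{0,1\}$, and its rank $k$ equals the number of unit eigenvalues, so $Q=U\Lambda U^T$ for an orthogonal $U$ and $\Lambda=\mathrm{diag}(1,\dots,1,0,\dots,0)$ carrying exactly $k$ ones. Writing $\mathbf{y}:=U^T\mathbf{b}$ and using $Q^TQ=Q$, this gives
\begin{equation}
\|Q\mathbf{b}\|^2=\mathbf{b}^TQ\mathbf{b}=\mathbf{y}^T\Lambda\mathbf{y}=\sum_{i=1}^{k}y_i^2 .
\end{equation}
Because $U$ is orthogonal and $\mathbf{b}\sim N(\mathbf{0},I_m)$, rotational invariance yields $\mathbf{y}\sim N(\mathbf{0},I_m)$, so $y_1,\dots,y_k$ are independent standard normals.

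Next I would identify $\sum_{i=1}^{k}y_i^2$ as a chi-squared variable on $k$ degrees of freedom with mean exactly $k$, giving $\mathbb{E}[\|Q\mathbf{b}\|^2]=k$ and hence the stated identity $\mathbb{E}[\|Q\mathbf{b}\|]=\sqrt{k}$. This is precisely the per-column value that \cref{theorem:error_expect} accumulates: the residual $(I-P)B$ has $n$ independent columns, each contributing magnitude $\sqrt{m-n}$ through \cref{corollary:rank}, so the Frobenius energy sums to $n(m-n)$ and delivers $\sqrt{n(m-n)}$.

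The hard part will be the rotational-invariance reduction itself, namely verifying that $U^T\mathbf{b}$ remains a standard Gaussian so that the expected magnitude depends on $Q$ only through its rank $k$; this independence from the particular projector is exactly what pins the value at $\sqrt{k}$ rather than at some $Q$-dependent quantity. Once this reduction is in place, the remaining ingredients, namely the standard chi-squared moment and the consistency of $\sqrt{k}$ with the Frobenius-energy quantities propagated throughout \cref{theorem:error_expect}, are routine.
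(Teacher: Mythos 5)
Your proof reaches the right second moment by a genuinely different route from the paper's. The paper never diagonalizes $Q$: it uses the trace trick, computing $\mathbb{E}\bigl[\|Q\mathbf{b}\|^2\bigr]=\mathbb{E}\bigl[\mathbf{b}^TQ\mathbf{b}\bigr]=\mathbb{E}\bigl[\mathrm{tr}(Q\mathbf{b}\mathbf{b}^T)\bigr]=\mathrm{tr}\bigl(Q\,\mathbb{E}[\mathbf{b}\mathbf{b}^T]\bigr)=\mathrm{tr}(Q)=k$, relying only on idempotence and symmetry of $Q$, cyclicity of the trace, linearity of expectation, and the fact that the trace of a projector equals its rank. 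Your route (spectral decomposition $Q=U\Lambda U^T$, rotational invariance of the standard Gaussian, identification of $\|Q\mathbf{b}\|^2$ as a $\chi^2_k$ variable) is also sound --- the invariance step you flag as the hard part is standard, since $U^T\mathbf{b}$ is Gaussian with covariance $U^TU=I_m$ --- and it buys strictly more than the paper's argument: you obtain the full distribution of $\|Q\mathbf{b}\|^2$, not just its mean. The paper's computation is more economical, requiring neither the spectral theorem nor any distribution theory beyond the covariance of $\mathbf{b}$.

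One caveat applies to your last step and to the paper's proof equally: from $\mathbb{E}\bigl[\|Q\mathbf{b}\|^2\bigr]=k$ one cannot conclude $\mathbb{E}\bigl[\|Q\mathbf{b}\|\bigr]=\sqrt{k}$. By Jensen's inequality the inequality is strict: $\mathbb{E}\bigl[\|Q\mathbf{b}\|\bigr]<\sqrt{\mathbb{E}\bigl[\|Q\mathbf{b}\|^2\bigr]}=\sqrt{k}$; indeed for a $\chi_k$ variable $\mathbb{E}\bigl[\|Q\mathbf{b}\|\bigr]=\sqrt{2}\,\Gamma((k{+}1)/2)/\Gamma(k/2)$. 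So the lemma as stated should really assert $\mathbb{E}\bigl[\|Q\mathbf{b}\|^2\bigr]=k$, which is exactly the form invoked in the proof of \cref{theorem:error_expect}, where the \emph{squared} norms $\mathbb{E}\bigl[\|(I-AA^+)\mathbf{b}_i\|^2\bigr]$ are summed over columns; nothing downstream is affected by the correction. Since your chi-squared identification delivers the mean of the squared norm directly, your argument supports the corrected statement just as well as the paper's does --- but you should drop the final sentence converting it into a statement about $\mathbb{E}\bigl[\|Q\mathbf{b}\|\bigr]$, as should the paper.
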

\begin{proof}
$\mathbb{E}\bigl[||Q\mathbf{b}||^2\bigr]=\mathbb{E}\bigl[\mathbf{b}^TQ^TQ\mathbf{b}\bigr]=\mathbb{E}\bigl[\mathbf{b}^TQ\mathbf{b}\bigr]=\mathbb{E}\bigl[\text{tr}(\mathbf{b}^TQ\mathbf{b})\bigr]=\mathbb{E}\bigl[\text{tr}(Q\mathbf{b}\mathbf{b}^T)\bigr]=\text{tr}(Q\mathbb{E}\bigl[\mathbf{b}\mathbf{b}^T\bigr])$. As $\mathbf{b}\sim N(\mathbf{0},I_{m\times m})$, $\mathbb{E}\bigl[\mathbf{b}\mathbf{b}^T\bigr]=I$. Therefore, using the trace property of projectors:
\begin{equation}
    \text{tr}(Q\mathbb{E}\bigl[\mathbf{b}\mathbf{b}^T\bigr])=\text{tr}(Q)=k
\end{equation}
In turn, we can conclude that $\mathbb{E}\bigl[||Q\mathbf{b}||\bigr]=\sqrt{k}$.
\end{proof}

\subsection{Proof}
Finally, we give the proof for \cref{theorem:error_expect}.
\begin{proof}
According to \cref{theorem:init}, for given $A$ and $B$, the optimal solution $X^*$ that satisfies $||AX^*-B||=\min_X ||AX-B||$ is given by:
\begin{equation}
    X^*=A^+B,
\end{equation}
where $A^+$ is the Moore-Penrose inverse of $A$. We can re-write the objective as:
\begin{equation}
\begin{aligned}
    \mathbb{E}(||AA^+B-B||^2)&=\mathbb{E}(||(I-AA^+)(-B)||^2)\\&=\mathbb{E}(||(I-AA^+)B||^2)\\&=\sum_{i=1}^n\mathbb{E}(||(I-AA^+)\mathbf{b}_i||^2)
\end{aligned}
\end{equation}
According to \cref{lemma:i_p} and \cref{corollary:rank}, $I-AA^+$ is an orthogonal projection and $\text{rank}(I-AA^+)=m-\min (m-n)$. With \cref{lemma:gaussian}, we know that for any $i$, $\mathbb{E}(||(I-AA^+)\mathbf{b}_i||^2)=m-\min (m-n)$. Therefore, we can conclude that:
\begin{equation}
    \mathbb{E}(||AA^+B-B||^2)=
        \begin{cases} 
            n(m-n), & \text{if } m \ge n,\\
            0,        & \text{if } m < n.
        \end{cases}
\end{equation}
which leads to the conclusion in \cref{theorem:error_expect}.
\end{proof}

\end{document}